\documentclass[twoside,11pt]{article}

\usepackage{blindtext}

\usepackage[abbrvbib, preprint]{jmlr2e}

\usepackage{amsmath}
\usepackage{booktabs}
\usepackage{multirow}
\usepackage{subcaption}

\newtheorem{assumption}{Assumption}

\usepackage{lastpage}
\jmlrheading{XX}{2026}{1-\pageref{LastPage}}{Submitted 3/26}{Under review}{XX-0000}{Yi Wei, Xuan Qi, and Furao Shen}

\ShortHeadings{Region Seeding via Pre-Activation Regularization}{Wei, Qi, Shen}
\firstpageno{1}

\begin{document}

\title{Region Seeding via Pre-Activation Regularization: A Geometric View of Piecewise Affine Neural Networks}

\author{\name Yi Wei$^{*}$ \email ywei@smail.nju.edu.cn \\
       \addr State Key Laboratory of Novel Software Technology\\
       School of Intelligence Science and Technology\\
       Nanjing University, Jiangsu, China\\
       \AND
       \name Xuan Qi$^{*}$ \email xuan.qi@iit.it \\
       \addr AI for Good\\
       Istituto Italiano di Tecnologia, Genoa, Italy\\
       DITEN\\
       University of Genoa, Genoa, Italy\\
       \AND
       \name Furao Shen$^{\dagger}$ \email frshen@nju.edu.cn \\
       \addr State Key Laboratory of Novel Software Technology\\
       School of Artificial Intelligence\\
       Nanjing University, Jiangsu, China}
\editor{My editor}

\maketitle
\begingroup
\renewcommand{\thefootnote}{\fnsymbol{footnote}}
\footnotetext[1]{Equal contribution. \quad $^{\dagger}$Corresponding author.}
\endgroup

\begin{abstract}
Deep networks with continuous piecewise affine activations induce polyhedral partitions of the input space, making the number of realized affine regions a natural measure of expressive capacity and a key determinant of how well the model can approximate nonlinear target functions. In practice, standard training realizes far fewer region refinements in data-visited neighborhoods than the architecture could in principle support, while existing region-count theory is primarily architectural and offers little guidance on how optimization shapes the realized partition near the data. Our theory provides a sufficient condition under which bringing neuron switching surfaces sufficiently close to data points ensures their intersection with local neighborhoods, which in turn implies a strict increase in the local affine-region count, yielding a principled training-time handle for seeding data-relevant partitions early in optimization. Guided by these results, we propose a plug-and-play region-seeding regularizer that encourages early partitioning while allowing task-driven refinement to dominate later in training. Experiments show that the regularizer increases the number of realized affine regions via exact enumeration and improves overall performance on toy datasets, while also improving early-stage accuracy and achieving comparable (or slightly improved) final accuracy on ImageNet-1k for classical models.
\end{abstract}

\begin{keywords}
Deep Learning, Continuous Piecewise Affine, Expressivity, Regularization
\end{keywords}

\section{Introduction}
Deep neural networks learn hierarchical representations by composing multiple layers of simple transformations~\cite{book36}. Deep networks with continuous piecewise affine (CPA) activations, such as ReLU~\cite{book5} and LeakyReLU~\cite{book6}, implement mappings that are affine on each cell of a finite polyhedral partition of the input space~\cite{book1}. The number of such affine regions is therefore a natural measure of a network's expressive capacity~\cite{book2}. This viewpoint is particularly natural for classification: with a linear readout on top of the final representation, each pairwise logit difference is itself CPA, so decision boundaries can be expressed as unions of polyhedral facets aligned with the induced partition. From a learning perspective, increasing the number of regions in the parts of the input space reached by the data expands the set of local affine behaviors the model can deploy during optimization, enabling finer-grained decision surfaces and more flexible function approximation~\cite{book3}.

A substantial body of theory bounds the total number of affine regions a CPA network can realize as a function of depth and width. Yet, in practice, trained models often exhibit substantially less region refinement in the parts of the input space visited by the data, and region counts are not straightforward to monitor or optimize directly during large-scale training~\cite{book4}. Moreover, much of the existing theory is architectural and global: it characterizes achievable region complexity, but offers limited guidance on how standard optimization procedures shape the realized partition near the data. Motivated by this, we investigate an explicit training-time mechanism that biases region formation toward data-relevant neighborhoods, while keeping the underlying architecture unchanged.

Our key observation is geometric and leads to a trainable surrogate. Each hidden unit induces a switching surface that separates its two affine regimes. If many such surfaces intersect a small neighborhood around a data point, that neighborhood is more likely to be subdivided into additional affine regions, increasing local model flexibility. Pre-activation magnitudes provide a direct handle on this effect: small magnitudes indicate that the corresponding switching surface lies close to the data point, making intersection with a nearby neighborhood more likely.

We formalize this intuition with two local theorems that connect an observable training quantity---pre-activation magnitude---to the emergence of affine regions near the data. Theorem~\ref{thm:monotone-cells} shows that, under mild regularity, whenever additional neuron zero sets intersect a small convex neighborhood around a data point, the CPA partition restricted to that neighborhood necessarily gains more affine regions. Theorem~\ref{thm:distance-intersection} provides a complementary guarantee that is directly usable in training: it gives a pre-activation-based sufficient condition ensuring that a neuron's zero set intersects a local neighborhood around the data point. Together, these results justify a simple training principle: encourage pre-activations to be small in magnitude on data early in optimization to seed more local partitions, and then gradually relax this bias to allow task-driven refinement.

Motivated by these geometric and local-region results, we introduce a plug-and-play region-seeding regularizer that biases optimization toward forming additional affine regions in neighborhoods of the data during the early stages of training. Concretely, the regularizer penalizes the (normalized) Euclidean magnitude of the pre-activation tensors entering each activation module on a mini-batch. We aggregate these penalties across activation modules using nonnegative, depth-dependent weights that control which parts of the network are emphasized, and apply an epoch-dependent annealing factor so the regularization effect is strongest early in training and gradually reduced thereafter. While our theoretical motivation is developed for CPA networks, we also apply the same penalty to non-CPA activations (e.g., GELU~\cite{book7}) as a practical heuristic.

Our contributions are as follows.
\begin{itemize}
\item We provide a local geometric foundation for region seeding: Theorem~\ref{thm:monotone-cells} shows that each additional neuron zero set intersecting a data neighborhood strictly increases the local affine-region count, and Theorem~\ref{thm:distance-intersection} gives a pre-activation-based sufficient condition guaranteeing such intersections, linking pre-activation control to provable local partition growth.
\item We propose a depth-weighted, annealed region-seeding regularizer that implements these insights using only batchwise pre-activations, offering an architecture-preserving training-time mechanism to influence where and when local affine regions form.
\item We validate our approach via exact affine-region enumeration for standard CPA networks on toy datasets, showing that the regularizer increases the number of realized regions and improves overall performance relative to standard training, and that it similarly improves early-stage accuracy while achieving comparable (or slightly improved) final accuracy on ImageNet-1k for classical models.
\end{itemize}

\section{Related Work}
\label{sec:related}

\paragraph{Affine-region geometry.}
Prior work studies CPA expressivity via the polyhedral partitions they induce, including algorithms for computing or extracting regions in specific architectures~\cite{book8,book9,book10,book17} and architecture-level region-count analyses (e.g., GNNs, convolutional ReLU models, and tropical-geometry viewpoints)~\cite{book11,book12,book13,book19,book29,book40}. These results quantify attainable complexity but are largely silent on how optimization concentrates refinement near the data.

\paragraph{Local complexity and training dynamics.}
Several works consider data-dependent notions of complexity and relate them to representation learning and optimization~\cite{book14}, or empirically/geom\-etrically analyze the evolution of partitions and decision regions during training~\cite{book3,book15,book16,book39,book41}. Collectively, these works emphasize that the realized partitions are shaped by the interaction between the data distribution and optimization dynamics.

\paragraph{Expressive capacity.}
Depth/width scaling laws, training-dependent effects, and universal-approximation results provide global representational guarantees~\cite{book18,book20,book21,book22,book23,book24,book25,book26,book27,book28,book30}, but typically do not yield actionable conditions for controlling the \emph{realized} partition near the data.

\paragraph{Advantage of our approach.}
We provide a local geometric link from an observable forward-pass quantity (pre-activation magnitude) to a sufficient condition for neuron switching surfaces to intersect data neighborhoods, which under mild regularity implies strict local region growth.

\section{Methodology}

\paragraph{Notation.}
We denote the input dimension by $d$ and write $x\in\mathbb{R}^d$ for a data point. The network has $L$ layers, and layer $\ell\!-\!1$ has width $m_{\ell-1}$ with feature map $h_{\ell-1}(x)\in\mathbb{R}^{m_{\ell-1}}$. For neuron $i$ in layer $\ell$, $w_{\ell,i}\in\mathbb{R}^{m_{\ell-1}}$ and $b_{\ell,i}\in\mathbb{R}$ are its weight and bias, with pre-activation $z_{\ell,i}(x)$ and input-space zero set $Z_{\ell,i}:=\{x\in\mathbb{R}^d:\,z_{\ell,i}(x)=0\}$. Let $\theta$ denote the collection of all network parameters, and let $\mathcal{P}_\theta$ be the induced CPA partition of the input space; $R_{\mathrm{parent}}(x)\in\mathcal{P}_\theta$ is the unique cell containing $x$. On $R_{\mathrm{parent}}(x)$, $h_{\ell-1}$ is affine with linear part $A_{\ell-1}\in\mathbb{R}^{m_{\ell-1}\times d}$ and offset $q_{\ell-1}\in\mathbb{R}^{m_{\ell-1}}$, so the local zero set corresponds to an affine hyperplane with normal $a_{\ell,i}\in\mathbb{R}^d$ and offset $c_{\ell,i}\in\mathbb{R}$. For $\varepsilon>0$, $P_\varepsilon(x)\subset\mathbb{R}^d$ denotes a convex closed local polytope (neighborhood) with $x\in\mathrm{int}(P_\varepsilon(x))$.

\subsection{Geometry behind our approach}
Recall that for layer $\ell\in\{1,\dots,L\}$ and neuron $i$, the pre-activation is
\begin{equation}
\label{eq:def-z-again}
z_{\ell,i}(x) \;=\; w_{\ell,i}^\top h_{\ell-1}(x) + b_{\ell,i},
\end{equation}
and the corresponding input-space zero set is
\begin{equation}
\label{eq:def-Z-again}
Z_{\ell,i}\;=\;\bigl\{\,x\in\mathbb{R}^d:\; z_{\ell,i}(x)=0\,\bigr\}.
\end{equation}
Consider a data point $x$ and its parent cell $R_{\mathrm{parent}}(x)$ in the current CPA partition of the input space: $x$ is guaranteed to lie inside this convex region. When we add neurons in the next layer, each neuron $(\ell,i)$ induces a hyperplane in input space, and the intersections of these hyperplanes with $R_{\mathrm{parent}}(x)$ can further subdivide this parent region into more child regions. Intuitively, the closer (in Euclidean distance) the neuron's hyperplane passes to $x$ inside $R_{\mathrm{parent}}(x)$, the more reliably it will intersect this parent polytope instead of missing it, thereby increasing the likelihood that the parent region is split by this neuron.

At the feature level, the pre-activation $z_{\ell,i}(x) = w_{\ell,i}^\top h_{\ell-1}(x) + b_{\ell,i}$ defines a hyperplane in the $h_{\ell-1}$-space,
\begin{equation}
\label{eq:H-hyperplane}
H_{\ell,i}
:=
\bigl\{\,h\in\mathbb{R}^{m_{\ell-1}} : w_{\ell,i}^\top h + b_{\ell,i} = 0\,\bigr\}.
\end{equation}
The Euclidean distance from the feature vector $h_{\ell-1}(x)$ to this hyperplane is
\begin{equation}
\label{eq:feature-hyperplane}
\begin{aligned}
\mathrm{dist}\bigl(h_{\ell-1}(x),H_{\ell,i}\bigr)
&=
\frac{\bigl|w_{\ell,i}^\top h_{\ell-1}(x) + b_{\ell,i}\bigr|}
     {\|w_{\ell,i}\|}
\\[4pt]
&=
\frac{|z_{\ell,i}(x)|}{\|w_{\ell,i}\|}.
\end{aligned}
\end{equation}

Thus, in feature space, a small pre-activation magnitude $|z_{\ell,i}(x)|$ means that the hyperplane $H_{\ell,i}$ passes close to the feature $h_{\ell-1}(x)$.

For the CPA partition in the \emph{input} space, we care about the distance from $x$ to the input-space zero set $Z_{\ell,i}$ \emph{inside its parent cell}. Inside the affine region $R_{\mathrm{parent}}(x)$ of $h_{\ell-1}$ that contains $x$, the map $h_{\ell-1}$ coincides with an affine map. More precisely, there exist a matrix $A_{\ell-1}\in\mathbb{R}^{m_{\ell-1}\times d}$ and a vector $q_{\ell-1}\in\mathbb{R}^{m_{\ell-1}}$, determined by the activation pattern of all previous layers on $R_{\mathrm{parent}}(x)$, such that
\begin{equation}
\label{eq:h-affine-parent}
h_{\ell-1}(x)
\;=\;
A_{\ell-1}\,x + q_{\ell-1},
\qquad \text{for all } x\in R_{\mathrm{parent}}(x).
\end{equation}
Substituting \eqref{eq:h-affine-parent} into \eqref{eq:def-z-again} shows that $z_{\ell,i}$ is affine in $x$ on $R_{\mathrm{parent}}(x)$ and we can write
\begin{equation}
\label{eq:local-affine}
z_{\ell,i}(x)
\;=\;
a_{\ell,i}^\top x + c_{\ell,i},
\qquad \text{for all } x\in R_{\mathrm{parent}}(x),
\end{equation}
where we identify the input-space normal vector and offset as
\begin{equation}
\label{eq:a-c-definition}
a_{\ell,i} \;:=\; A_{\ell-1}^\top w_{\ell,i},
\qquad
c_{\ell,i} \;:=\; w_{\ell,i}^\top q_{\ell-1} + b_{\ell,i}.
\end{equation}
Thus $Z_{\ell,i}\cap R_{\mathrm{parent}}(x)$ is a $(d\!-\!1)$-dimensional affine
hyperplane patch in input space with normal $a_{\ell,i}$. The distance from $x$
to the corresponding \emph{local hyperplane}
\(
H_{\ell,i}:=\{y\in\mathbb{R}^d:\ a_{\ell,i}^\top y+c_{\ell,i}=0\}
\)
is
\begin{equation}
\label{eq:dist}
\mathrm{dist}\bigl(x,H_{\ell,i}\bigr)
=
\frac{\bigl|a_{\ell,i}^\top x + c_{\ell,i}\bigr|}{\|a_{\ell,i}\|}
=
\frac{|z_{\ell,i}(x)|}{\|a_{\ell,i}\|}.
\end{equation}
When the orthogonal projection of $x$ onto $H_{\ell,i}$ lies inside
$R_{\mathrm{parent}}(x)$ (e.g., for sufficiently small neighborhoods contained
in $R_{\mathrm{parent}}(x)$), this also equals
$\mathrm{dist}\bigl(x, Z_{\ell,i}\cap R_{\mathrm{parent}}(x)\bigr)$.
Consequently, for fixed local geometry (i.e., fixed $\|a_{\ell,i}\|$ on the
parent cell), driving $|z_{\ell,i}(x)|$ toward zero brings the neuron's switching
surface closer to $x$ in input space. Eq.\,\eqref{eq:dist} is the input-space analogue of the point-to-hyperplane distance formula, with input-space normal $a_{\ell,i}$.
In practice, we directly observe and control $z_{\ell,i}(x)=w_{\ell,i}^\top h_{\ell-1}(x)+b_{\ell,i}$ during training, while the normal length $\|a_{\ell,i}\|$ is determined by the accumulated linear mappings of the previous layers.
Therefore, encouraging small $|z_{\ell,i}(x)|$ on data is an effective way to reduce the distance from $x$ to the neuron's switching surface inside its parent region, increasing the chance that it intersects the corresponding parent polytope. Figure~\ref{fig:geometry-overview} provides a schematic illustration of the geometric intuition: for an input $x$, smaller $\mathrm{dist}\!\left(x,\text{neurons}\right)$ increases the likelihood that these surfaces intersect a local neighborhood around $x$, thereby promoting finer local partitioning of the input space.
\begin{figure}[t]
  \centering
  \includegraphics[width=0.98\linewidth]{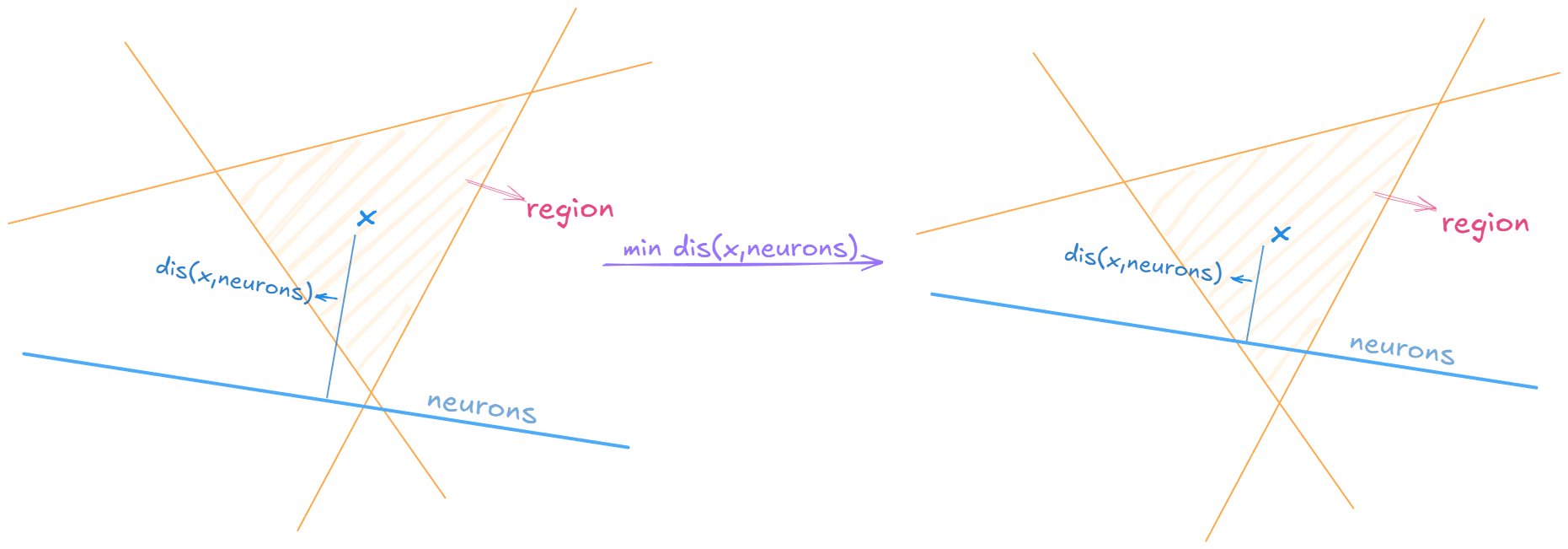}
  \caption{Schematic illustration of the region-seeding intuition. For an input $x$, decreasing the distances $\mathrm{dist}\!\left(x,\text{neurons}\right)$ makes it more likely that neuron-induced switching surfaces intersect a local neighborhood around $x$, which in turn promotes finer local partitioning near the data.}
  \label{fig:geometry-overview}
\end{figure}

\subsection{Local intersection and region-growth theory}

\begin{theorem}[Local region growth from additional zero-set intersections]
\label{thm:monotone-cells}
Fix $x\in\mathbb{R}^d$ and $\varepsilon>0$. Let $P_\varepsilon(x)\subset\mathbb{R}^d$ be a convex closed polytope
with $x\in\mathrm{int}\bigl(P_\varepsilon(x)\bigr)$. Let $\mathcal{J}$ be finite and
$\{z_j:\mathbb{R}^d\to\mathbb{R}\}_{j\in\mathcal{J}}$ be continuous piecewise-affine (CPA) functions with
zero sets $Z_j:=\{y:\,z_j(y)=0\}$. Let $\mathcal{P}_{\mathcal{J}}$ denote the induced sign partition, i.e.,
the collection of full-dimensional open connected cells on which every $z_j$ has a constant strict sign.
Assume $\mathcal{J}$ satisfies Assumption~\ref{assump:local-nondegeneracy} inside $P_\varepsilon(x)$.

Define the set of locally intersecting indices and the local region count by
\begin{equation}
\label{eq:def-SI}
\begin{aligned}
\mathcal{S}_\varepsilon(x;\mathcal{J})
&:=\Bigl\{\,j\in\mathcal{J}:\;
Z_j\cap \mathrm{int}\bigl(P_\varepsilon(x)\bigr)\neq\varnothing\,\Bigr\},\\
I_\varepsilon(x;\mathcal{J})
&:=\bigl|\mathcal{S}_\varepsilon(x;\mathcal{J})\bigr|.
\end{aligned}
\end{equation}
\begin{equation}
\label{eq:def-N}
N_\varepsilon(x;\mathcal{J})
:=\#\{R\in\mathcal{P}_{\mathcal{J}}:\;R\cap P_\varepsilon(x)\neq\varnothing\}.
\end{equation}
Then:
\begin{enumerate}
\item[\textbf{(i)}]
Let $j^\star\notin\mathcal{J}$ and $\mathcal{J}':=\mathcal{J}\cup\{j^\star\}$. Suppose there exists a cell
$R\in\mathcal{P}_{\mathcal{J}}$ such that
\begin{equation}
\label{eq:cut-cond-compact}
\begin{aligned}
Z_{j^\star}\cap R\cap \mathrm{int}\bigl(P_\varepsilon(x)\bigr)
&\neq \varnothing,\\
\exists\,y^\pm\in R\cap \mathrm{int}\bigl(P_\varepsilon(x)\bigr)
\ \text{s.t.}\quad
\pm z_{j^\star}(y^\pm)&>0.
\end{aligned}
\end{equation}

Assume additionally that $\mathcal{J}'$ satisfies Assumption~\ref{assump:local-nondegeneracy} inside $P_\varepsilon(x)$.
Then
\begin{equation}
\label{eq:monotone-plus1-compact}
N_\varepsilon(x;\mathcal{J}')\ \ge\ N_\varepsilon(x;\mathcal{J})+1.
\end{equation}

\item[\textbf{(ii)}]
In particular,
\begin{equation}
\label{eq:linear-lb-compact}
N_\varepsilon(x;\mathcal{J})\ \ge\ 1+I_\varepsilon(x;\mathcal{J}).
\end{equation}

\item[\textbf{(iii)}]
If the $I_\varepsilon(x;\mathcal{J})$ intersecting traces
$\{Z_j\cap \mathrm{int}(P_\varepsilon(x)):\ j\in\mathcal{S}_\varepsilon(x;\mathcal{J})\}$
are in local general position inside $P_\varepsilon(x)$ and each such trace is a single $(d\!-\!1)$-dimensional
affine hyperplane patch, then
\begin{equation}
\label{eq:gpb-upper-local-compact}
N_\varepsilon(x;\mathcal{J})\ \le\ \sum_{k=0}^{d}\binom{I_\varepsilon(x;\mathcal{J})}{k}.
\end{equation}
\end{enumerate}
\end{theorem}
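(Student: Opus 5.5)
For (i), I will compare the two partitions directly and build an injection from the cells counted by $N_\varepsilon(x;\mathcal{J})$ into those counted by $N_\varepsilon(x;\mathcal{J}')$ that misses at least one target. Assumption~\ref{assump:local-nondegeneracy} (presumably: zero sets are nowhere dense inside $P_\varepsilon(x)$, so the union of the $Z_j$ has empty interior there) gives two facts. First, every cell $R'\in\mathcal{P}_{\mathcal{J}'}$ lies inside a unique cell $R\in\mathcal{P}_{\mathcal{J}}$: the cell of $\mathcal{J}$ with the same signs on $\mathcal{J}$, since $R'$ is connected and avoids every $Z_j$. Second, if $R\cap P_\varepsilon(x)\neq\varnothing$ for an open cell $R$, then $R\cap\mathrm{int}(P_\varepsilon(x))\neq\varnothing$, because $P_\varepsilon(x)$ is a convex body with nonempty interior. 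The set $R\cap\mathrm{int}(P_\varepsilon(x))\setminus Z_{j^\star}$ is open and nonempty (nondegeneracy for $\mathcal{J}'$), so it meets some cell of $\mathcal{P}_{\mathcal{J}'}$ contained in $R$. This yields a map $\phi$ choosing, for each $R$ counted by $N_\varepsilon(x;\mathcal{J})$, a child $R'\subseteq R$ that meets $P_\varepsilon(x)$. Children of distinct parents are disjoint, so $\phi$ is injective.

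For the $+1$, I will use the cut condition \eqref{eq:cut-cond-compact}. Take the distinguished cell $R$ and the points $y^+,y^-\in R\cap\mathrm{int}(P_\varepsilon(x))$. Let $R'_\pm$ be the cell of $\mathcal{J}'$ containing $y^\pm$; by nondegeneracy I can perturb $y^\pm$ slightly so they lie in open cells, which preserves the strict signs. The sign of $z_{j^\star}$ is constant on each cell of $\mathcal{P}_{\mathcal{J}'}$, so $R'_+\neq R'_-$. Both lie in $R$ and both meet $P_\varepsilon(x)$. Since $\phi(R)$ is at most one of them, the other is an extra counted cell outside the image of $\phi$, which gives \eqref{eq:monotone-plus1-compact}.

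For (ii), I will add the indices of $\mathcal{S}_\varepsilon(x;\mathcal{J})$ one at a time. With $\mathcal{J}=\varnothing$ there is one cell, so the count starts at $1$. The indices not in $\mathcal{S}_\varepsilon$ have no zeros in the interior, so they never decrease the count (by the injection above). For each $j\in\mathcal{S}_\varepsilon$ I must produce a cell $R$ of the current partition that satisfies \eqref{eq:cut-cond-compact}. This is the main obstacle. A zero in the interior does not automatically give sign change: $z_j$ could touch zero without crossing, or cross only along the existing zero sets. I will read Assumption~\ref{assump:local-nondegeneracy} as guaranteeing exactly this, namely that each locally intersecting zero set is a genuine sign-changing hypersurface not contained in the union of the others. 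With that reading, pick a point of $Z_j\cap\mathrm{int}(P_\varepsilon(x))$ off the other zero sets. A small ball around it lies in a single current cell and contains points of both signs. Since Assumption~\ref{assump:local-nondegeneracy} is also needed for each intermediate family, I will assume it holds for subfamilies, or restrict the ordering accordingly. Applying (i) $I_\varepsilon$ times gives \eqref{eq:linear-lb-compact}.

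For (iii), inside $\mathrm{int}(P_\varepsilon(x))$ the intersecting traces are pieces of $I:=I_\varepsilon(x;\mathcal{J})$ affine hyperplanes in general position. The non-intersecting $z_j$ have constant sign on the interior, so they do not split it. Every counted cell restricted to the interior is therefore a union of faces of the hyperplane arrangement intersected with the convex set $\mathrm{int}(P_\varepsilon(x))$. Cells reaching only the boundary also meet the interior, by the second fact above. Each region of the arrangement intersected with a convex set is convex, hence connected, so the count is at most the number of regions of $I$ hyperplanes in general position in $\mathbb{R}^d$. Zaslavsky's theorem, or the standard induction $r(I,d)=r(I-1,d)+r(I-1,d-1)$, bounds this by $\sum_{k=0}^d\binom{I}{k}$. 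One subtlety remains: a single sign cell could be disconnected by $P_\varepsilon(x)$ yet counted once. That only helps the upper bound, since I count cells globally and each contributes at least one arrangement region.
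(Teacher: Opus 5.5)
Your proposal is correct and follows essentially the same route as the paper: an injective parent-to-child refinement map plus the extra child forced by the cut condition for (i), sequential addition of the locally intersecting indices starting from $\mathcal{P}_\varnothing=\{\mathbb{R}^d\}$ for (ii), and for (iii) a refinement of the sign partition on $\mathrm{int}(P_\varepsilon(x))$ by the hyperplane arrangement, combined with the $\sum_{k=0}^d\binom{I}{k}$ region bound. Your guessed reading of Assumption~\ref{assump:local-nondegeneracy} is exactly what the paper uses: it is hereditary over subfamilies, (A3) supplies a point off the other zero sets, and (A2) supplies the sign change in the containing cell; your perturbation of $y^\pm$ is unnecessary, since points of $R$ with $z_{j^\star}(y^\pm)\neq 0$ already avoid every zero set of $\mathcal{J}'$.
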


Theorem~\ref{thm:monotone-cells} formalizes our first desideratum: increasing the number of neuron zero sets intersecting a convex parent region (and hence any small convex neighborhood of a data point it contains) necessarily increases the number of local affine regions within that region.

\begin{theorem}[Local neuron--region intersection via pre-activation magnitude]
\label{thm:distance-intersection}
Fix $x\in\mathbb{R}^d$ and let $R_{\mathrm{parent}}(x)$ be its parent cell in the
CPA partition $\mathcal{P}_\theta$. Let $P_\varepsilon(x)\subset\mathbb{R}^d$ be a
convex closed polytope such that
\begin{equation}
x\in\mathrm{int}\bigl(P_\varepsilon(x)\bigr),
\qquad
P_\varepsilon(x)\subset R_{\mathrm{parent}}(x).
\end{equation}
For any nonzero vector $v\in\mathbb{R}^d$, define the directional thickness of
$P_\varepsilon(x)$ at $x$ along $v$ by
\begin{equation}
\begin{aligned}
\delta_\varepsilon(x;v)
&:=
\sup\Bigl\{ t>0 : x + s\,\frac{v}{\|v\|} \in \mathrm{int}\bigl(P_\varepsilon(x)\bigr)
\\
&\qquad\qquad\quad
\text{for all } s\in[-t,t]\Bigr\}.
\end{aligned}
\end{equation}
Then $\delta_\varepsilon(x;v)>0$ for all $v\neq 0$.

Consider any neuron $(\ell,i)$ and its pre-activation
\begin{equation}
z_{\ell,i}(y) = w_{\ell,i}^\top h_{\ell-1}(y) + b_{\ell,i}.
\end{equation}
On $R_{\mathrm{parent}}(x)$, $h_{\ell-1}$ coincides with an affine map
$h_{\ell-1}(y)=A_{\ell-1}y+q_{\ell-1}$, so $z_{\ell,i}$ admits the local affine
representation
\begin{equation}
z_{\ell,i}(y)
=
a_{\ell,i}^\top y + c_{\ell,i},
\qquad y\in R_{\mathrm{parent}}(x),
\end{equation}
with
\begin{equation}
a_{\ell,i} := A_{\ell-1}^\top w_{\ell,i},
\qquad
c_{\ell,i} := w_{\ell,i}^\top q_{\ell-1} + b_{\ell,i}.
\end{equation}
Assume $a_{\ell,i}\neq 0$ and define the associated input-space hyperplane
\begin{equation}
H_{\ell,i}
:=
\bigl\{y\in\mathbb{R}^d : a_{\ell,i}^\top y + c_{\ell,i} = 0\bigr\}.
\end{equation}

If the pre-activation at $x$ satisfies
\begin{equation}
\label{eq:intersection-condition-polytope}
\bigl|z_{\ell,i}(x)\bigr|
=
\bigl|a_{\ell,i}^\top x + c_{\ell,i}\bigr|
\;<\;
\delta_\varepsilon\bigl(x;a_{\ell,i}\bigr)\,\bigl\|a_{\ell,i}\bigr\|,
\end{equation}
then the neuron zero set intersects the interior of the local polytope:
\begin{equation}
Z_{\ell,i}\cap\mathrm{int}\bigl(P_\varepsilon(x)\bigr)\neq\varnothing,
\end{equation}
and hence $(\ell,i)\in\mathcal{S}_\varepsilon(x)$.

In particular, for fixed $P_\varepsilon(x)$ and local normal vector $a_{\ell,i}$,
making the pre-activation magnitude $\bigl|z_{\ell,i}(x)\bigr|$ sufficiently small
is a sufficient condition for the neuron $(\ell,i)$ to contribute a zero-set
intersection inside $P_\varepsilon(x)$.
\end{theorem}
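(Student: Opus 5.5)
The plan is a one-dimensional intermediate-value argument along the normal direction $a_{\ell,i}$.

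\emph{Step 1: $\delta_\varepsilon(x;v)>0$.} Since $x\in\mathrm{int}(P_\varepsilon(x))$, there is $r>0$ with the open ball $B(x,r)\subset\mathrm{int}(P_\varepsilon(x))$. Then for every $t<r$ and every $s\in[-t,t]$ the point $x+s\,v/\|v\|$ lies in $B(x,r)$. Hence $\delta_\varepsilon(x;v)\ge r>0$. Moreover, by the definition of the supremum, for any $t<\delta_\varepsilon(x;v)$ the whole segment $\{x+s\,v/\|v\|: |s|\le t\}$ lies in $\mathrm{int}(P_\varepsilon(x))$. To see this, pick $t'$ in the defining set with $t<t'$.

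\emph{Step 2: the line $s\mapsto z_{\ell,i}(x+s u)$.} Set $u:=a_{\ell,i}/\|a_{\ell,i}\|$ and $\delta:=\delta_\varepsilon(x;a_{\ell,i})$. By Step 1, $x+su\in\mathrm{int}(P_\varepsilon(x))\subset R_{\mathrm{parent}}(x)$ for $|s|<\delta$. On this range the local affine representation is therefore valid, and
\[
z_{\ell,i}(x+su)=a_{\ell,i}^\top x+c_{\ell,i}+s\,\|a_{\ell,i}\| = z_{\ell,i}(x)+s\,\|a_{\ell,i}\|.
\]
Define $s^\star:=-z_{\ell,i}(x)/\|a_{\ell,i}\|$, which uses $a_{\ell,i}\neq0$. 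Hypothesis \eqref{eq:intersection-condition-polytope} gives exactly $|s^\star|<\delta$. Thus $y^\star:=x+s^\star u$ lies in $\mathrm{int}(P_\varepsilon(x))$. It also satisfies $z_{\ell,i}(y^\star)=0$, so $y^\star\in Z_{\ell,i}$. Geometrically, $y^\star$ is the orthogonal projection of $x$ onto $H_{\ell,i}$, consistent with \eqref{eq:dist}. This proves $Z_{\ell,i}\cap\mathrm{int}(P_\varepsilon(x))\neq\varnothing$, i.e.\ $(\ell,i)\in\mathcal{S}_\varepsilon(x)$ by definition \eqref{eq:def-SI}.

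\emph{Step 3: the ``in particular'' clause.} With $P_\varepsilon(x)$ and $a_{\ell,i}$ fixed, the right-hand side of \eqref{eq:intersection-condition-polytope} is a fixed positive number by Step 1. Hence any sufficiently small $|z_{\ell,i}(x)|$ meets the condition, and Step 2 applies.

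The only delicate point is Step 2's use of the local affine formula. It must be applied to points of $R_{\mathrm{parent}}(x)$, not of the global zero set. Containment $P_\varepsilon(x)\subset R_{\mathrm{parent}}(x)$ guarantees that the zero $y^\star$ of the affine extension is a genuine zero of the CPA function $z_{\ell,i}$. A second point is handling the strict inequality in the supremum so that the segment lies in the \emph{interior*}; choosing $t'$ strictly between $|s^\star|$ and $\delta$ resolves this.
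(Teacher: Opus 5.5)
Your proof is correct and follows essentially the same route as the paper. Like the paper, you get $\delta_\varepsilon>0$ from an interior ball, then move from $x$ along $u=a_{\ell,i}/\|a_{\ell,i}\|$ by $s^\star=-z_{\ell,i}(x)/\|a_{\ell,i}\|$ to the orthogonal projection onto $H_{\ell,i}$, using the supremum property to keep the segment in $\mathrm{int}(P_\varepsilon(x))\subset R_{\mathrm{parent}}(x)$. The differences are cosmetic: your formula for $s^\star$ absorbs the paper's separate case $z_{\ell,i}(x)=0$, and taking $t<r$ in Step 1 avoids the paper's small endpoint slip, where it asserts $|s|<\eta$ even at $s=\pm\eta$.
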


Theorem~\ref{thm:distance-intersection} provides the complementary, actionable link needed for training: it turns the geometric notion of ``a neuron's zero set intersects a data neighborhood'' into a condition expressed directly in terms of the neuron's pre-activation at the data point. In particular, it shows that if \(|z_{\ell,i}(x)|\) is sufficiently small relative to the local scale of the neighborhood along the induced normal direction, then the corresponding zero set must cross the interior of that neighborhood. This result motivates our regularization choice: by biasing pre-activations toward zero on data early in optimization, we can systematically increase the number of local neuron--region intersections, which then translates into local region growth via Theorem~\ref{thm:monotone-cells}. Proofs are provided in Appendix~\ref{Proof1} and~\ref{Proof2}.

\subsection{A plug-and-play region-seeding regularizer}

\paragraph{Layerwise pre-activation penalty.}
Given a mini-batch \(B=\{x_b\}_{b=1}^{|B|}\), we apply a penalty to the
\emph{pre-activation tensors} entering each nonlinearity module (i.e., the inputs
to activation modules). For an activation module indexed by \(\ell\), let
\(u_\ell(x_b)\) denote this pre-activation tensor for sample \(x_b\), and let
\(n_\ell\) be the number of elements in \(\mathrm{vec}(u_\ell(x_b))\) (independent
of \(b\)). We define the batchwise layer penalty
\begin{equation}
\label{eq:rlayer}
\mathcal{R}_\ell(\theta;B)
\;:=\;
\frac{1}{|B|}
\sum_{b=1}^{|B|}
\frac{\bigl\|\mathrm{vec}\bigl(u_\ell(x_b)\bigr)\bigr\|_2}{n_\ell}.
\end{equation}
This normalized \(\ell_2\) penalty directly encourages small pre-activation
magnitudes on data, using only forward-pass quantities. 

\paragraph{Layer aggregation and annealing.}
We aggregate \eqref{eq:rlayer} across activation modules with nonnegative weights
\(\lambda_\ell\) and an epoch-dependent annealing factor \(\eta(t)\in[0,1]\):
\begin{equation}
\label{eq:rtotal}
\mathcal{R}(\theta;B,t)
\;=\;
\alpha\,\eta(t)\sum_{\ell} \lambda_\ell\,\mathcal{R}_\ell(\theta;B),
\end{equation}
where \(\alpha>0\) controls the overall strength. The weights \(\lambda_\ell\)
control the relative contribution of each activation module to the regularizer and
can be used to emphasize different depths. The annealing factor \(\eta(t)\) is chosen to decrease over the course of training so that the regularizer primarily shapes early optimization.

\paragraph{Training objective.}
We optimize the standard task loss augmented with the region-seeding regularizer:
\begin{equation}
\label{eq:loss-total}
\mathcal{L}(\theta;B,t)
\;=\;
\mathcal{L}_{\mathrm{task}}(\theta;B) \;+\; \mathcal{R}(\theta;B,t).
\end{equation}

\paragraph{Connection to the Theorems.}
The penalty \eqref{eq:rlayer} encourages pre-activations to be small on data,
which reduces the distance from a data point to the corresponding neuron switching
surface within its local affine region (Theorem~\ref{thm:distance-intersection}),
thereby increasing the likelihood of neuron--neighborhood intersections. By
Theorem~\ref{thm:monotone-cells}, additional such intersections lead to increased
local affine-region counts. In this way, \eqref{eq:rtotal} operationalizes the
theoretical mechanism of region seeding using a lightweight, training-time
regularizer. While our theory is developed for CPA networks, we also apply the
same penalty to non-CPA activations (e.g., GELU) as a practical heuristic.

\section{Experiments}

\subsection{Toy-data experiments}
\label{subsec:toy}

\begin{figure}[ht]
  \centering
  \setlength{\tabcolsep}{6pt}
  \renewcommand{\arraystretch}{0}
  \begin{tabular}{@{}c@{}}
    \begin{subfigure}[t]{0.95\textwidth}
      \centering
      \includegraphics[width=\textwidth]{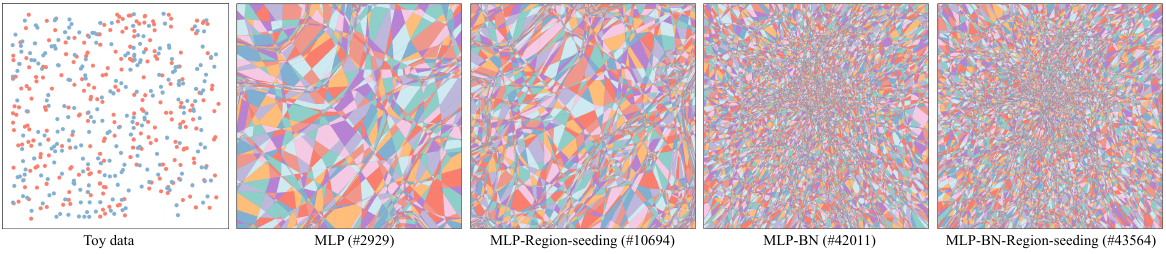}
      \caption{Feedforward MLP}
      \label{fig:toy_regions_vis_mlp}
    \end{subfigure}
    \\[2pt]
    \begin{subfigure}[t]{0.95\textwidth}
      \centering
      \includegraphics[width=\textwidth]{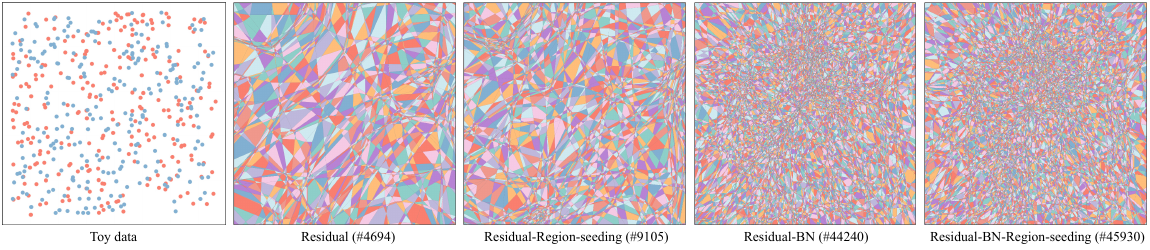}
      \caption{Residual connections}
      \label{fig:toy_regions_vis_residual}
    \end{subfigure}
  \end{tabular}
  \caption{Toy dataset: exact affine-region partition visualizations. Each row is a composite visualization containing five panels (left to right): the toy dataset and four input-space affine partitions obtained by exact enumeration for the corresponding architecture and training variant. The numbers in parentheses denote the realized affine-region count at epoch 1000.}
  \label{fig:toy_regions_vis}
\end{figure}

\begin{figure*}[ht]
  \centering
  \setlength{\tabcolsep}{6pt}
  \renewcommand{\arraystretch}{0}
  \begin{tabular}{@{}cc@{}}
    \begin{subfigure}[t]{0.48\textwidth}
      \centering
      \includegraphics[width=\textwidth]{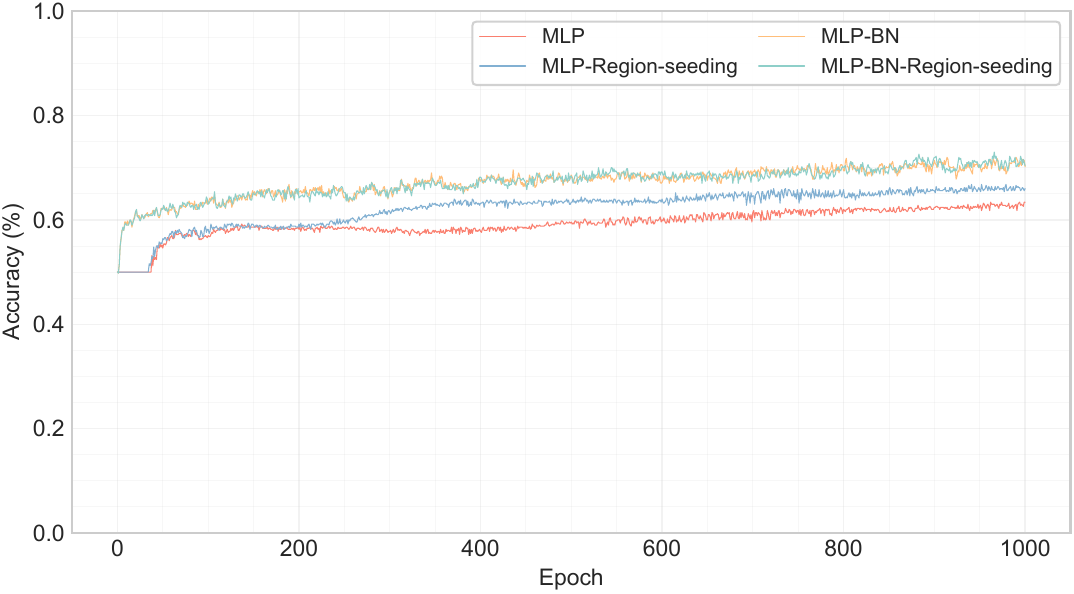}
      \caption{Feedforward MLP}
      \label{fig:toy_acc_mlp}
    \end{subfigure} &
    \begin{subfigure}[t]{0.48\textwidth}
      \centering
      \includegraphics[width=\textwidth]{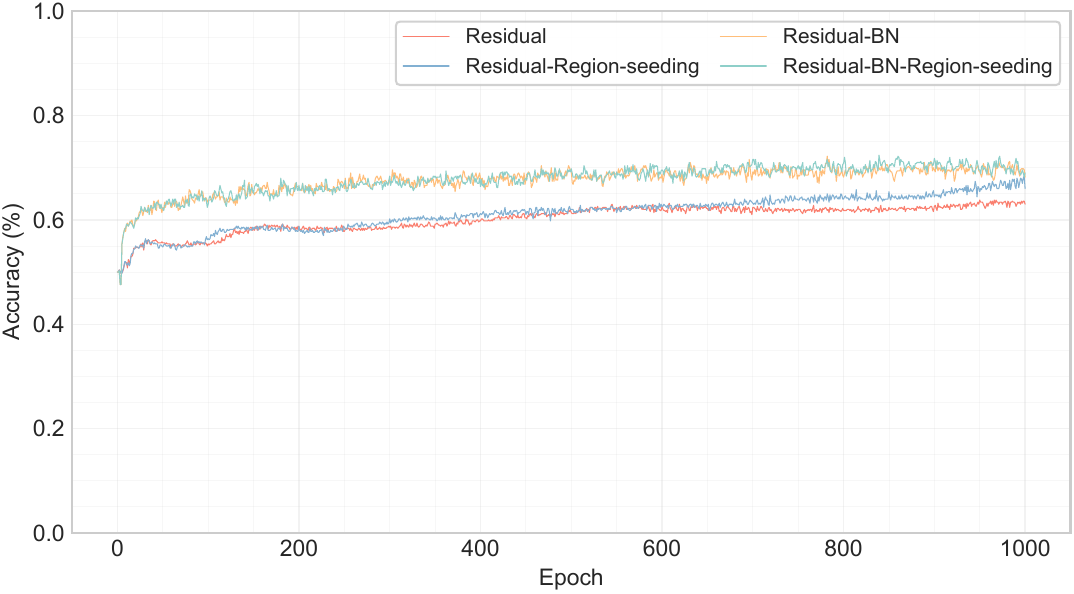}
      \caption{Residual connections}
      \label{fig:toy_acc_res}
    \end{subfigure}
  \end{tabular}
  \caption{Toy dataset: test accuracy over training epochs. Comparison of baseline training and training with the proposed region-seeding regularizer, reported for both architectures under BN and non-BN settings.}
  \label{fig:toy_acc}
\end{figure*}

\begin{figure*}[ht]
  \centering
  \setlength{\tabcolsep}{6pt}
  \renewcommand{\arraystretch}{0}
  \begin{tabular}{@{}cc@{}}
    \begin{subfigure}[t]{0.48\textwidth}
      \centering
      \includegraphics[width=\textwidth]{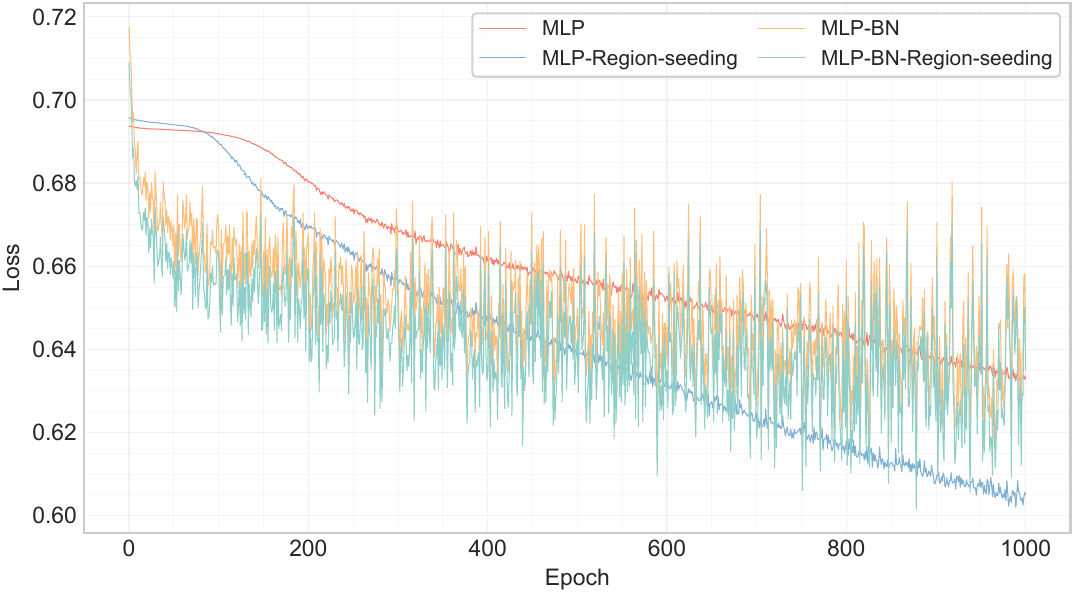}
      \caption{Feedforward MLP}
      \label{fig:toy_loss_mlp}
    \end{subfigure} &
    \begin{subfigure}[t]{0.48\textwidth}
      \centering
      \includegraphics[width=\textwidth]{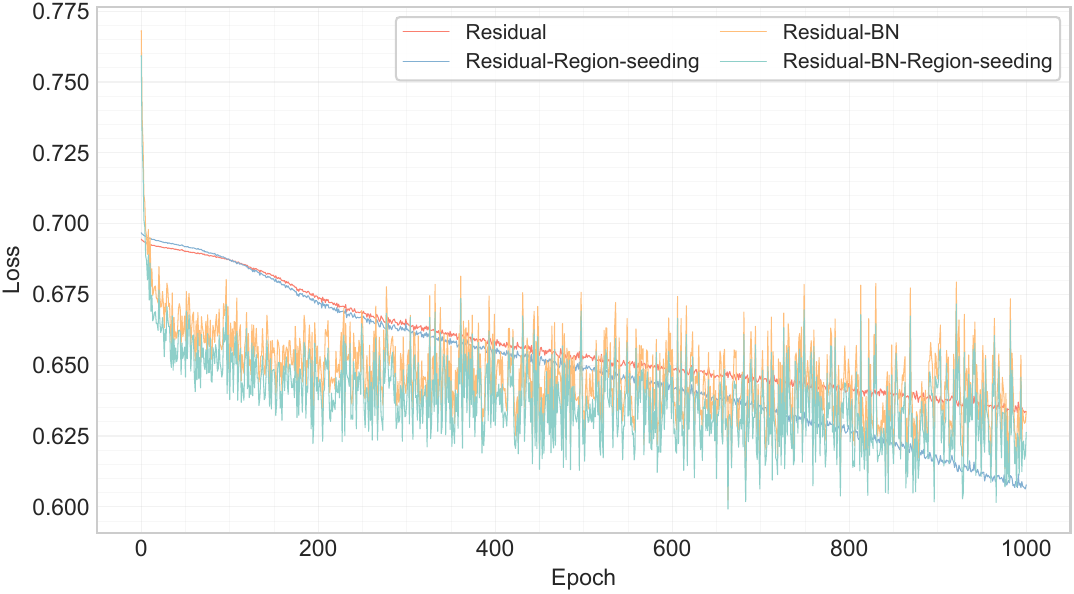}
      \caption{Residual connections}
      \label{fig:toy_loss_res}
    \end{subfigure}
  \end{tabular}
  \caption{Toy dataset: optimization dynamics over training epochs. We plot the task loss excluding the regularization term to assess progress on the classification objective, comparing baseline and region-seeded training for both architectures under BN and non-BN settings.}
  \label{fig:toy_loss}
\end{figure*}

\begin{figure*}[ht]
  \centering
  \setlength{\tabcolsep}{6pt}
  \renewcommand{\arraystretch}{0}
  \begin{tabular}{@{}cc@{}}
    \begin{subfigure}[t]{0.48\textwidth}
      \centering
      \includegraphics[width=\textwidth]{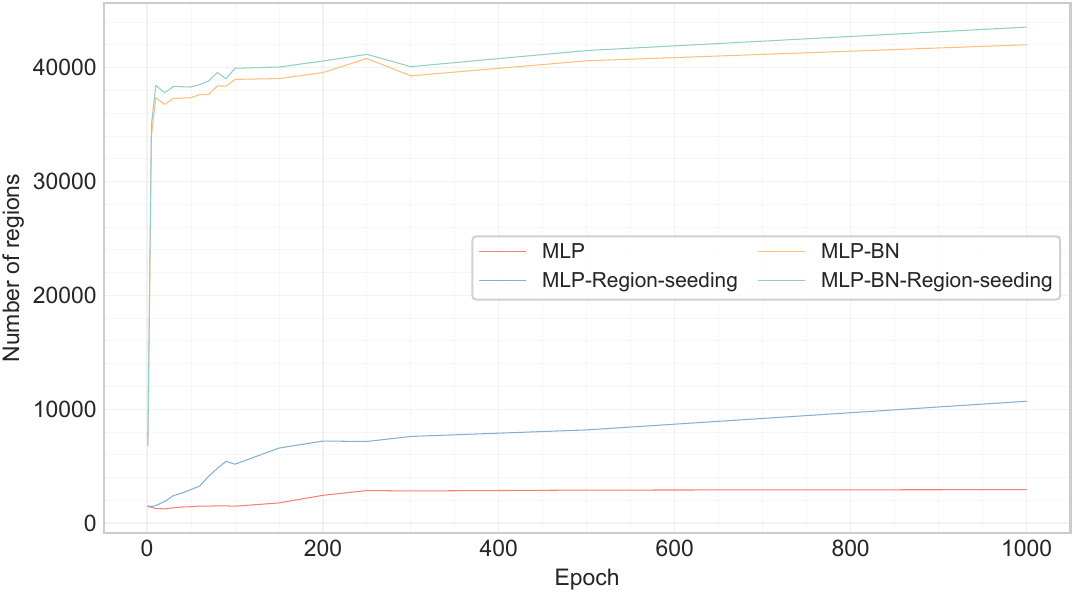}
      \caption{Feedforward MLP}
      \label{fig:toy_regions_mlp}
    \end{subfigure} &
    \begin{subfigure}[t]{0.48\textwidth}
      \centering
      \includegraphics[width=\textwidth]{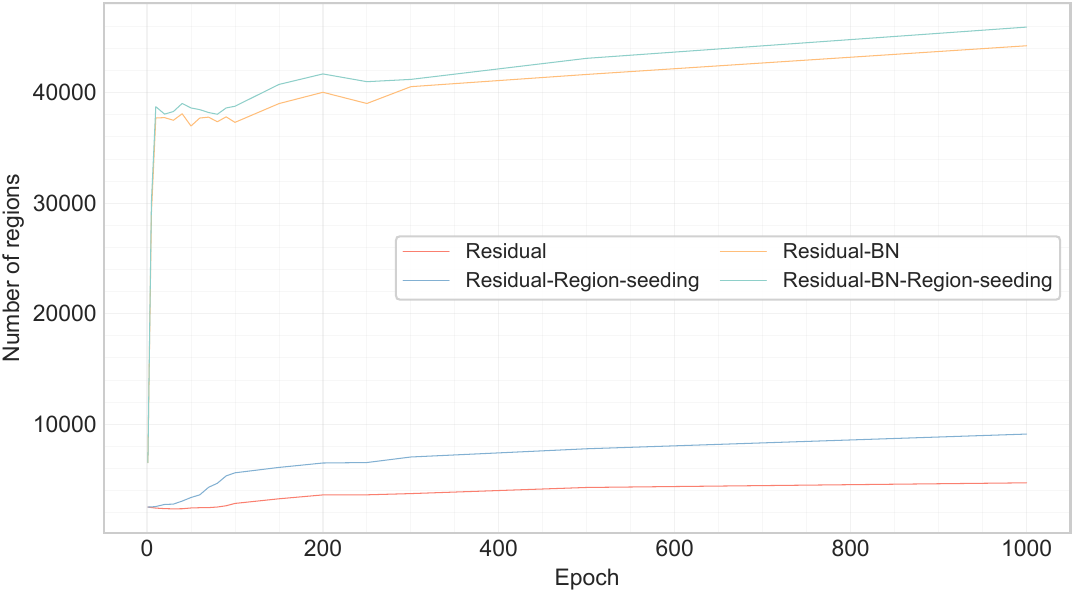}
      \caption{Residual connections}
      \label{fig:toy_regions_res}
    \end{subfigure}
  \end{tabular}
  \caption{Toy dataset: exact realized affine-region counts. Exact enumeration of the number of full-dimensional affine regions induced on $[-1,1]^2$, comparing baseline and region-seeded training for both architectures under BN and non-BN settings.}
  \label{fig:toy_regions}
\end{figure*}

We evaluate region seeding on a controlled synthetic (random~\cite{book38}) binary classification
benchmark intended to isolate how training-time partition seeding affects realized local
expressivity. We use random data to minimize dataset-induced bias toward any particular
geometric shape, so that observed partition refinements more directly reflect the
training-time seeding effect rather than alignment to a specific target structure. The
dataset consists of $500$ samples in $\mathbb{R}^2$, with inputs generated uniformly
over the square domain $[-1,1]^2$ and assigned to two classes (Figure~\ref{fig:toy_regions_vis}).
Unless stated otherwise, we report averages over 3 random seeds. All implementation details and hyperparameters (optimizer, schedule, batch size, annealing design, and any auxiliary settings) are provided in the Appendix~\ref{C}, which also reports full experimental results on two additional toy datasets.

\paragraph{Architectures and training objective.}
We study two continuous piecewise-affine architectures on $\mathbb{R}^2$: (i) a feedforward MLP and (ii) a network with two residual~\cite{book31} connections. Both produce two-class logits via a final linear readout. For each architecture, we evaluate two normalization settings---\emph{without batch normalization (BN)~\cite{book32}} and \emph{with BN}---and, within each (architecture, BN setting) pair, we compare standard training (classification loss only) against training augmented with the proposed region-seeding regularizer (Eq.~\eqref{eq:loss-total}). Concretely, the feedforward MLP has five hidden layers of width $32$, with a linear map, normalization, and ReLU at each hidden layer. The residual-connection architecture uses three width-$32$ layers arranged into two residual stages (two residual connections) before the final linear classifier; in both cases, the only architectural difference between the BN and non-BN variants is the presence or absence of BN after the linear layers. The region-seeding term penalizes the normalized $\ell_2$ magnitude of pre-activation tensors entering each nonlinearity module on the mini-batch (Eq.~\eqref{eq:rtotal}).

\paragraph{Exact affine-region enumeration.}
A key advantage of this low-dimensional setting is that the realized affine partition can
be computed exactly.
For each trained model, we enumerate the full-dimensional cells of the induced CPA
partition restricted to the input domain and report the resulting number of realized
affine regions.
In addition to reporting scalar counts, we visualize the induced input-space partitions
via exact affine-region plots (Figure~\ref{fig:toy_regions_vis}). Specifically,
Figure~\ref{fig:toy_regions_vis} (a) shows the feedforward MLP and
Figure~\ref{fig:toy_regions_vis} (b) shows the residual-connection architecture; within each
row, the composite figure contains five panels (left to right): the toy dataset, followed
by four exact affine-partition visualizations corresponding to the four training settings
given by the presence/absence of BN~\cite{book32} and the presence/absence of the
region-seeding regularizer. For each panel, the number reported in parentheses is the
realized affine-region count at epoch 1000, enabling a direct, architecture-matched
comparison between qualitative partition structure and the exact enumerated region count.

\paragraph{Metrics and results.}
We report (i) test accuracy, (ii) optimization dynamics via the task loss excluding the regularization term (to isolate progress on the classification objective), and (iii) the exact number of realized affine regions on $[-1,1]^2$ obtained by enumeration. Figures~\ref{fig:toy_acc}--\ref{fig:toy_regions} summarize these signals over training epochs, pairing the feedforward MLP and the residual-connection architecture within each metric for direct comparison. Across both architectures and both BN settings, adding region seeding increases the number of realized affine regions (Figure~\ref{fig:toy_regions}) and is accompanied by improved early-stage optimization behavior, reflected in a faster decrease of the task loss (Figure~\ref{fig:toy_loss}) and higher test accuracy across most epochs (Figure~\ref{fig:toy_acc}). These observations are consistent with the proposed mechanism: encouraging small pre-activation magnitudes on data increases the likelihood that neuron switching surfaces intersect data neighborhoods (Theorem~\ref{thm:distance-intersection}), which in turn yields increased local region counts under mild regularity (Theorem~\ref{thm:monotone-cells}). Notably, the qualitative trends persist in the presence of BN, suggesting that the seeding effect is robust to batch-dependent normalization.

\subsection{Real-data experiments}
\label{sec:imagenet}

\begin{figure*}[ht]
\centering
\setlength{\tabcolsep}{2pt} 
\renewcommand{\arraystretch}{0} 
\begin{tabular}{@{}cc@{}}
\includegraphics[width=0.48\textwidth]{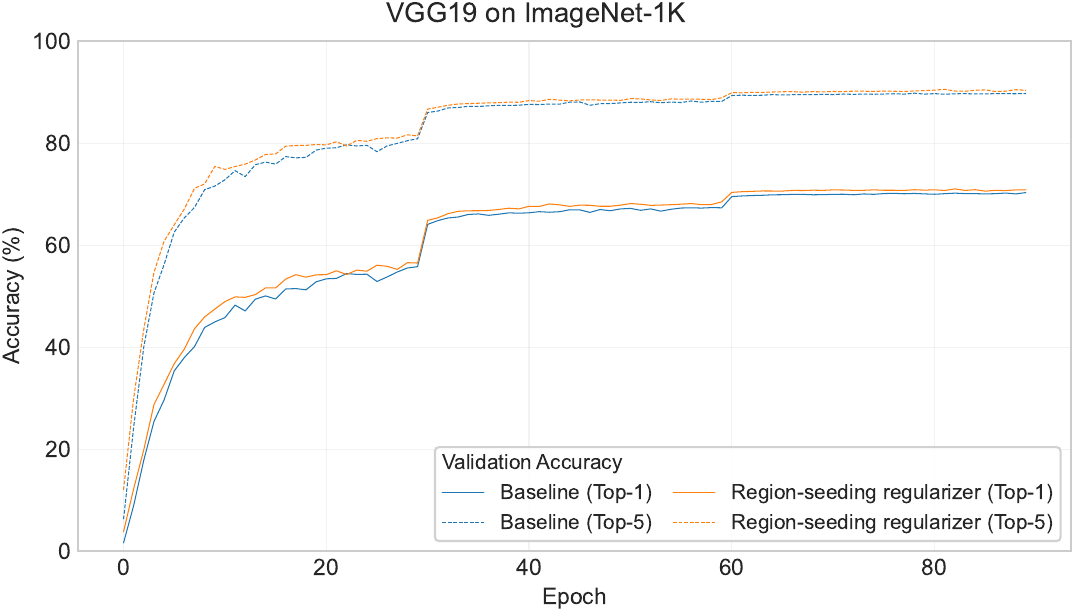} &
\includegraphics[width=0.48\textwidth]{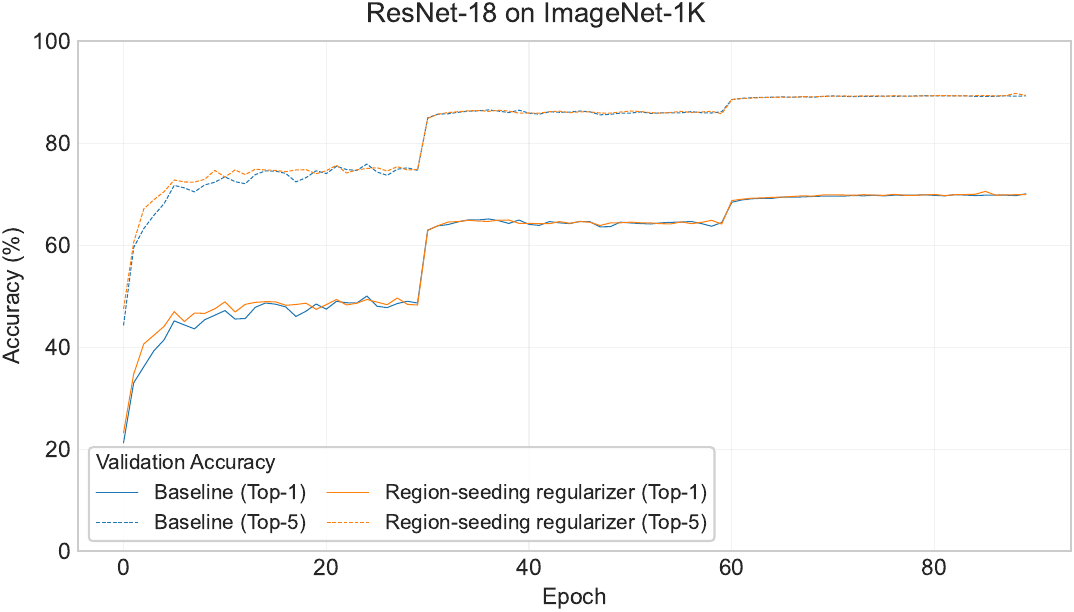} \\
\includegraphics[width=0.48\textwidth]{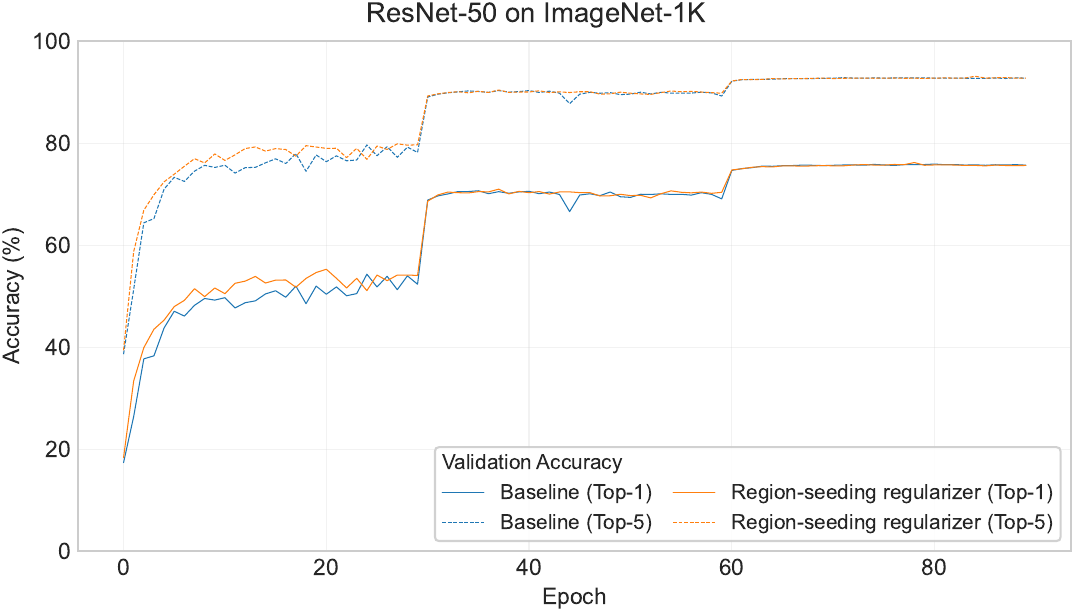} &
\includegraphics[width=0.48\textwidth]{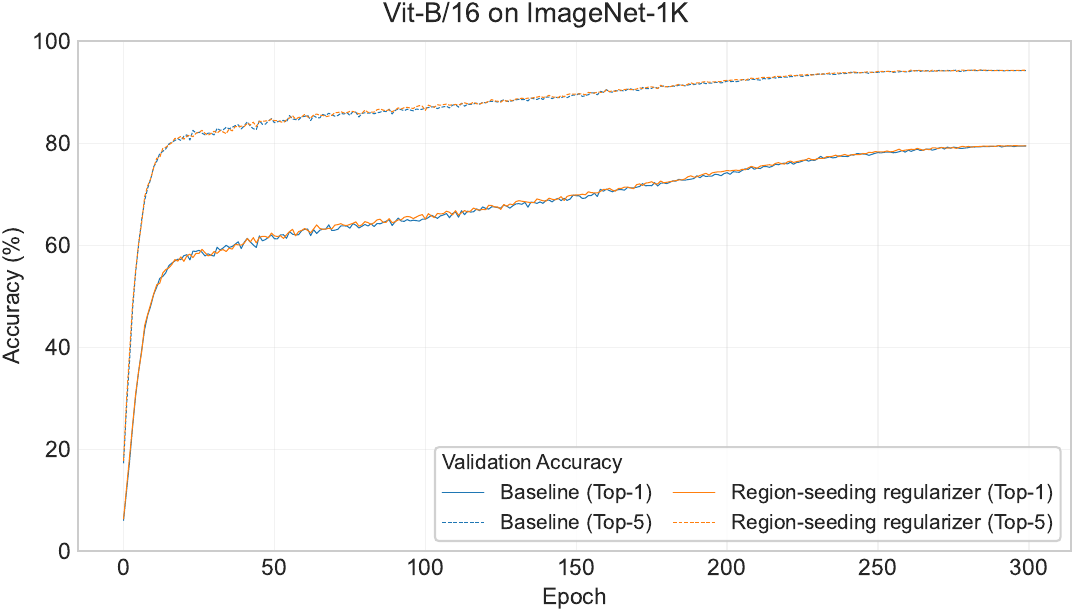} \\
\end{tabular}
\caption{ImageNet-1k validation accuracy trajectories over the full training schedule.
Each panel compares baseline training with training using the region-seeding regularizer,
reporting both Top-1 and Top-5 accuracy.}
\label{fig:imagenet_acc}
\end{figure*}

\begin{figure*}[ht]
\centering
\setlength{\tabcolsep}{2pt} 
\renewcommand{\arraystretch}{0} 
\begin{tabular}{@{}cc@{}}
\includegraphics[width=0.48\textwidth]{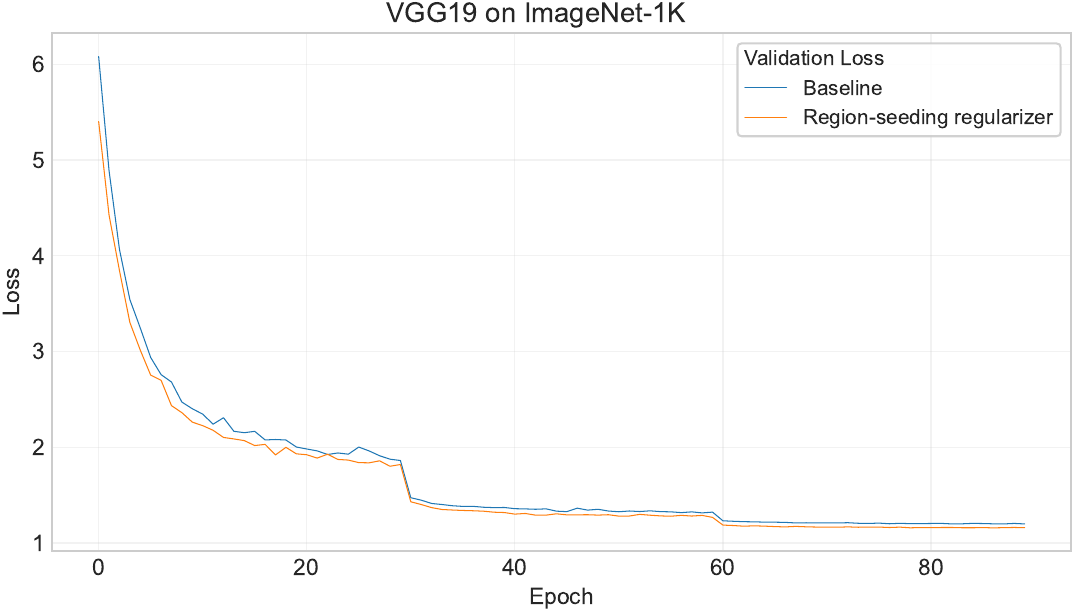} &
\includegraphics[width=0.48\textwidth]{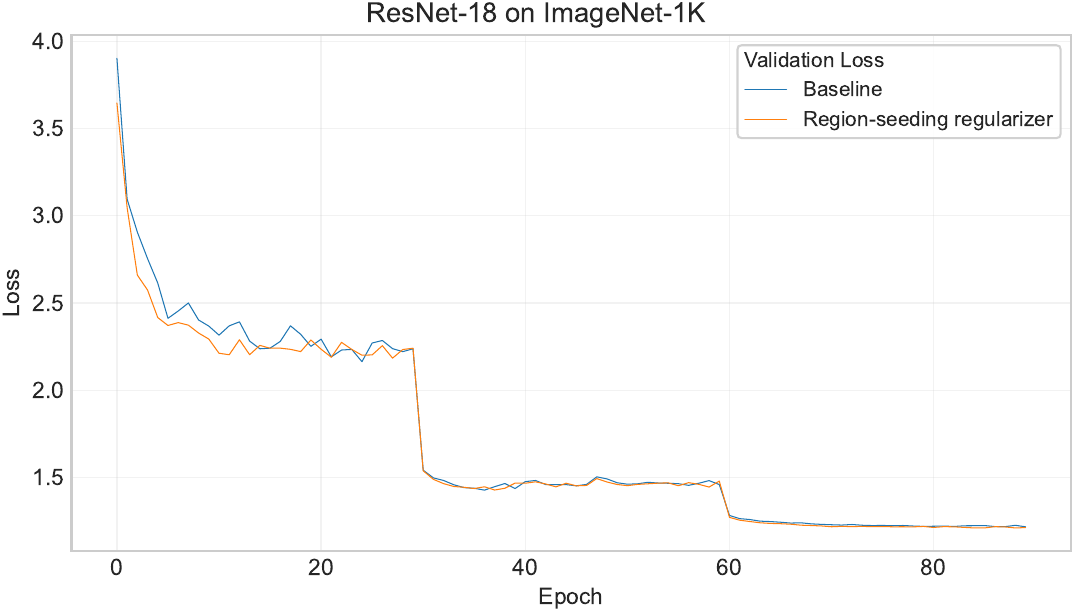} \\
\includegraphics[width=0.48\textwidth]{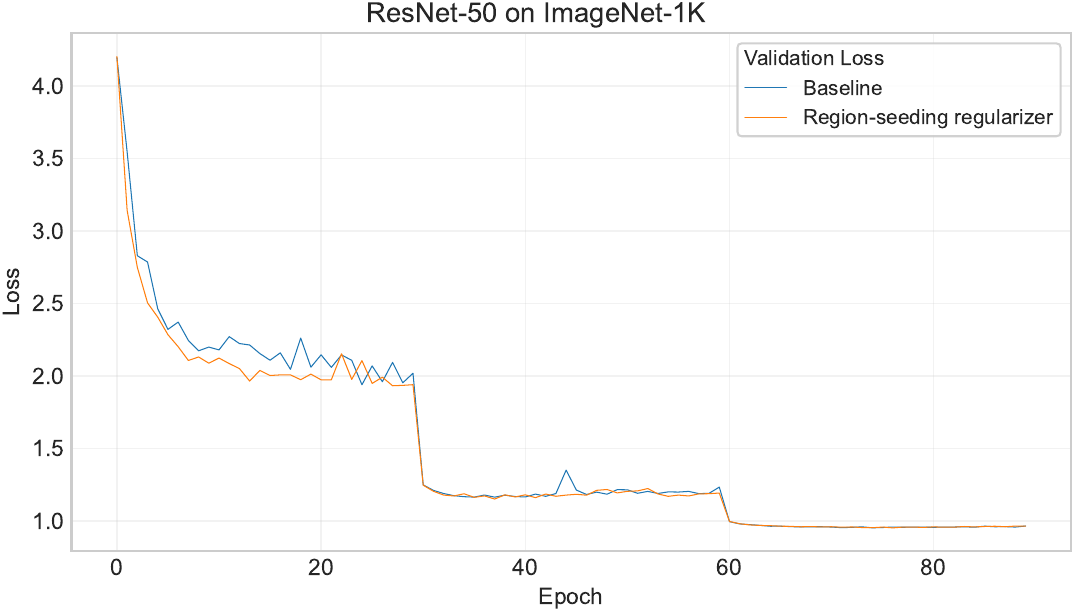} &
\includegraphics[width=0.48\textwidth]{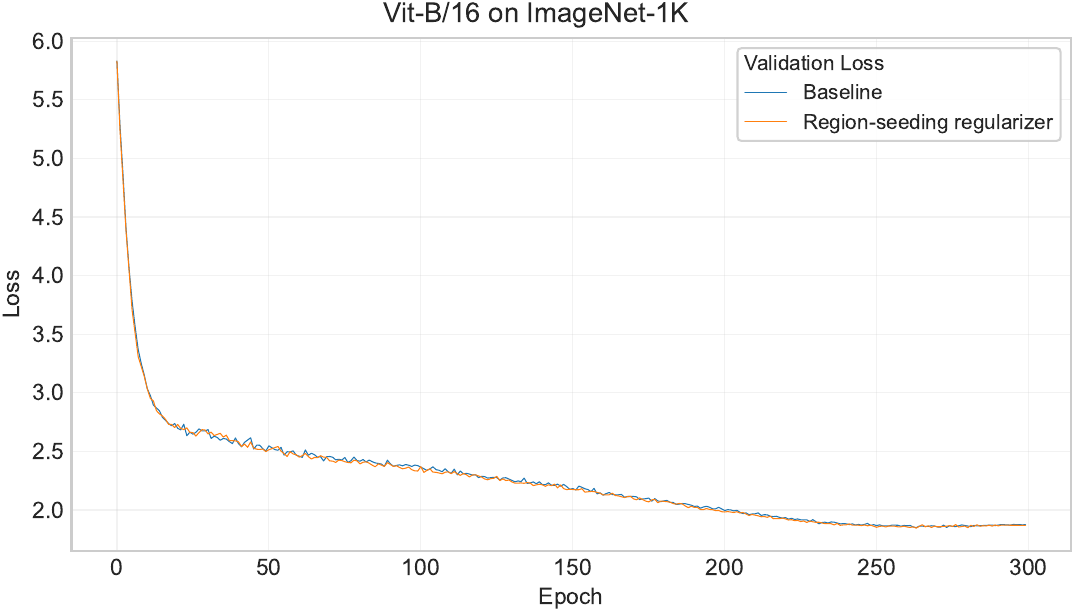} \\
\end{tabular}
\caption{ImageNet-1k optimization dynamics over the full training schedule.
Each panel compares baseline training and region-seeded training.
We plot the task loss (excluding the regularization term) to assess progress on the classification objective.}
\label{fig:imagenet_loss}
\end{figure*}

We evaluate the proposed region-seeding regularizer (Eq.~\eqref{eq:loss-total}) on
ImageNet-1k~\cite{book33} using several widely used architectures. In small-scale toy settings, one
can sometimes exactly enumerate realized affine regions; however, for ImageNet-scale
models such enumeration is typically difficult to carry out in practice and would add
substantial overhead beyond standard training and evaluation. We therefore characterize
the effect of region seeding on ImageNet-1k using standard, directly observed signals:
validation Top-1/Top-5 accuracy trajectories (Figure~\ref{fig:imagenet_acc}) and the
task loss excluding the regularization term (Figure~\ref{fig:imagenet_loss}). Reporting
the task loss without the auxiliary penalty isolates progress on the classification
objective and keeps the loss curves directly comparable between baseline and
regularized runs.
\paragraph{Setup.}
We train on ImageNet-1k using the standard train/validation split and report single-crop
validation Top-1 and Top-5 accuracy. We follow the official PyTorch ImageNet training
recipes for ResNet-18~\cite{book31}, VGG-19~\cite{book34} without BN (VGG-19 noBN), and ResNet-50~\cite{book31}, and additionally
include ViT-B/16~\cite{book35}. For each architecture, baseline and region-seeded runs use identical
data augmentation, optimizer, learning-rate schedule, batch size, weight decay, and
training length; the only change is adding the region-seeding term in
Eq.~\eqref{eq:loss-total}. Unless stated otherwise, we run each configuration with three
random seeds and report mean curves. Hyperparameters, annealing details, and depth-weight
ablations are provided in the Appendix~\ref{D}.

\paragraph{Early-epoch behavior.}
Across VGG-19 noBN, ResNet-18, ResNet-50, and ViT-B/16, the region-seeded runs show higher
validation accuracy than their matched baselines during the early portion of training
(Figure~\ref{fig:imagenet_acc}). This pattern holds for both Top-1 and Top-5 in each
panel, with a visibly consistent separation in the initial epochs. Over the same early
training window, the task-loss curves decrease more rapidly under region-seeded training
than under baseline training (Figure~\ref{fig:imagenet_loss}). Because the plotted loss
excludes the regularization term, this faster decrease reflects faster improvement on
the classification objective as measured by the task loss. The Top-5 trajectories provide
a complementary, ranking-based view of early progress: Top-5 accuracy measures whether
the correct class appears among the five highest-scoring predictions, and the early Top-5
gaps in Figure~\ref{fig:imagenet_acc} therefore correspond directly to the correct label
being placed within the top five more often early in training.

\paragraph{Late-stage behavior.}
As training proceeds, the differences between baseline and region-seeded curves narrow in
both accuracy and loss (Figures~\ref{fig:imagenet_acc} and \ref{fig:imagenet_loss}). By
the end of the training schedule, the region-seeded runs achieve validation performance
that is comparable to the baselines, and in several panels the final Top-1/Top-5 accuracy
is slightly higher for the region-seeded run (Figure~\ref{fig:imagenet_acc}). The task
loss curves likewise approach similar terminal values (Figure~\ref{fig:imagenet_loss}).
This late-stage convergence behavior is consistent with the design of our regularizer:
the region-seeding term is multiplied by an annealing factor $\eta(t)$ that decays over
training (Eq.~\eqref{eq:rtotal}), so its influence is intentionally strongest early and
is reduced later, allowing the task loss to dominate optimization near convergence.

\paragraph{ViT-B/16 as a heuristic extension.}
Our formal region-growth analysis targets networks with continuous piecewise-affine
activations, whereas ViT-B/16 uses smooth nonlinearities. We nevertheless include ViT-B/16
to assess whether the same training-time modification yields similar \emph{observable}
effects on standard ImageNet metrics outside the CPA setting. In the ViT panels of
Figures~\ref{fig:imagenet_acc} and \ref{fig:imagenet_loss}, region-seeded training shows
the same qualitative curve behavior as the CNN models: higher early Top-1/Top-5 accuracy
and a faster early decrease in task loss, followed by comparable (and in the plotted
trajectory, slightly improved) end-of-schedule accuracy. We treat this as an empirical
observation about optimization and validation trajectories and do not attribute it to an
affine-region mechanism.

On ImageNet-1k under matched training recipes for VGG-19 noBN, ResNet-18, ResNet-50, and
ViT-B/16, adding the region-seeding regularizer is associated with (i) higher early-epoch
validation Top-1 and Top-5 accuracy and (ii) faster early reduction of the task loss
(Figures~\ref{fig:imagenet_acc} and Figures~\ref{fig:imagenet_loss}). By the end of training, the
resulting validation accuracy is comparable to the baseline and in several cases slightly
higher.

\section{Conclusion}
We presented a local geometric theory showing that bringing neuron switching surfaces close to data points yields provable growth of local affine-region complexity in CPA networks. Based on this link, we proposed a simple region-seeding regularizer that encourages small pre-activations early in training and is annealed over time. Experiments show increased realized region counts when exact enumeration is possible, improved early-stage optimization and ImageNet-1k accuracy, and comparable final performance. More broadly, the theory suggests alternative training-time objectives that explicitly promote finer local affine partitions around data, beyond the particular regularizer studied here.
\newpage

\vskip 0.2in
\bibliography{sample}

@inproceedings{book1,
  author    = {Guido Mont{\'u}far and Razvan Pascanu and Kyunghyun Cho and Yoshua Bengio},
  title     = {On the Number of Linear Regions of Deep Neural Networks},
  booktitle = {Advances in Neural Information Processing Systems},
  volume    = {27},
  pages     = {2924--2932},
  year      = {2014}
}

@inproceedings{book2,
  author    = {Boris Hanin and David Rolnick},
  title     = {Deep {ReLU} Networks Have Surprisingly Few Activation Patterns},
  booktitle = {Advances in Neural Information Processing Systems},
  volume    = {32},
  pages     = {359--368},
  year      = {2019}
}

@inproceedings{book3,
  author    = {Randall Balestriero and Richard G. Baraniuk},
  title     = {A Spline Theory of Deep Learning},
  booktitle = {Proc. Int. Conf. Mach. Learn. ({ICML})},
  volume    = {80},
  pages     = {374--383},
  year      = {2018},
  publisher = {PMLR}
}

@article{book4,
  title={The Computational Complexity of Counting Linear Regions in ReLU Neural Networks},
  author={Stargalla, Moritz and Hertrich, Christoph and Reichman, Daniel},
  journal = {Advances in Neural Information Processing Systems},
  year={2025}
}

@inproceedings{book5,
  title={Rectified linear units improve restricted boltzmann machines},
  author={Nair, Vinod and Hinton, Geoffrey E},
  booktitle={Proceedings of the 27th international conference on machine learning (ICML-10)},
  pages={807--814},
  year={2010}
}

@inproceedings{book6,
  author    = {Andrew L. Maas and Awni Y. Hannun and Andrew Y. Ng},
  title     = {Rectifier Nonlinearities Improve Neural Network Acoustic Models},
  booktitle = {Proc. {ICML} Workshop on Deep Learning for Audio, Speech and Language Processing},
  year      = {2013},
  address   = {Atlanta, GA, USA}
}

@article{book7,
  title={Gaussian Error Linear Units (Gelus)},
  author={Hendrycks, D},
  journal={arXiv preprint arXiv:1606.08415},
  year={2016}
}

@inproceedings{book8,
  author    = {Hanna Tseran and Guido Mont{\'u}far},
  title     = {On the Expected Complexity of Maxout Networks},
  booktitle = {Advances in Neural Information Processing Systems},
  volume    = {34},
  pages     = {28995--29008},
  year      = {2021}
}

@inproceedings{book9,
  author    = {Arturs Berzins},
  title     = {Polyhedral Complex Extraction from {ReLU} Networks Using Edge Subdivision},
  booktitle = {Proc. Int. Conf. Mach. Learn. ({ICML})},
  volume    = {202},
  pages     = {2234--2244},
  year      = {2023},
  publisher = {PMLR}
}

@inproceedings{book10,
  author    = {Ahmed Imtiaz Humayun and Randall Balestriero and Guha Balakrishnan and Richard G. Baraniuk},
  title     = {SplineCam: Exact Visualization and Characterization of Deep Network Geometry and Decision Boundaries},
  booktitle = {Proc. {IEEE/CVF} Conf. Comput. Vis. Pattern Recognit. ({CVPR})},
  pages     = {3789--3798},
  year      = {2023},
  publisher = {{IEEE}},
  doi       = {10.1109/CVPR52729.2023.00369}
}

@article{book11,
  author  = {Hao Chen and Yu Guang Wang and Huan Xiong},
  title   = {Lower and Upper Bounds for Numbers of Linear Regions of Graph Convolutional Networks},
  journal = {Neural Networks},
  volume  = {168},
  pages   = {394--404},
  year    = {2023},
  doi     = {10.1016/j.neunet.2023.09.025}
}

@article{book12,
  author  = {Huan Xiong and Lei Huang and Wenston J. T. Zang and Xiantong Zhen and Guo-Sen Xie and Bin Gu and Le Song},
  title   = {On the Number of Linear Regions of Convolutional Neural Networks With Piecewise Linear Activations},
  journal = {{IEEE} Trans. Pattern Anal. Mach. Intell.},
  volume  = {46},
  number  = {7},
  pages   = {5131--5148},
  year    = {2024},
  doi     = {10.1109/TPAMI.2024.3361155}
}

@inproceedings{book13,
  author    = {Pawel Piwek and Adam Klukowski and Tianyang Hu},
  title     = {Exact Count of Boundary Pieces of {ReLU} Classifiers: Towards the Proper Complexity Measure for Classification},
  booktitle = {Proc. Conf. Uncertainty in Artificial Intelligence ({UAI})},
  volume    = {216},
  pages     = {1673--1683},
  year      = {2023},
  publisher = {PMLR}
}

@inproceedings{book14,
  author    = {Niket Patel and Guido Mont{\'u}far},
  title     = {On the Local Complexity of Linear Regions in Deep {ReLU} Networks},
  booktitle = {Proc. Int. Conf. Mach. Learn. ({ICML})},
  year      = {2025},
  publisher = {OpenReview.net}
}

@inproceedings{book15,
  author    = {Setareh Cohan and Nam Hee Kim and David Rolnick and Michiel van de Panne},
  title     = {Understanding the Evolution of Linear Regions in Deep Reinforcement Learning},
  booktitle = {Advances in Neural Information Processing Systems},
  volume    = {35},
  year      = {2022}
}

@article{book16,
  author  = {J. Elisenda Grigsby and Kathryn Lindsey},
  title   = {On Transversality of Bent Hyperplane Arrangements and the Topological Expressiveness of {ReLU} Neural Networks},
  journal = {{SIAM} J. Appl. Algebra Geom.},
  volume  = {6},
  number  = {2},
  pages   = {216--242},
  year    = {2022},
  doi     = {10.1137/20M1368902}
}

@article{book17,
  author  = {Alexis Goujon and Arian Etemadi and Michael Unser},
  title   = {On the Number of Regions of Piecewise Linear Neural Networks},
  journal = {J. Comput. Appl. Math.},
  volume  = {441},
  pages   = {115667},
  year    = {2024},
  doi     = {10.1016/j.cam.2023.115667}
}

@inproceedings{book18,
  author    = {Boris Hanin and Ryan S. Jeong and David Rolnick},
  title     = {Deep {ReLU} Networks Preserve Expected Length},
  booktitle = {Proc. Int. Conf. Learn. Represent. ({ICLR})},
  year      = {2022},
  publisher = {OpenReview.net}
}

@inproceedings{book19,
  author    = {Yuan Wang},
  title     = {Estimation and Comparison of Linear Regions for {ReLU} Networks},
  booktitle = {Proc. Int. Joint Conf. Artif. Intell. ({IJCAI})},
  pages     = {3544--3550},
  year      = {2022},
  publisher = {ijcai.org},
  doi       = {10.24963/ijcai.2022/492}
}

@inproceedings{book20,
  author    = {Saket Tiwari and George Konidaris},
  title     = {Effects of Data Geometry in Early Deep Learning},
  booktitle = {Advances in Neural Information Processing Systems},
  volume    = {35},
  year      = {2022}
}

@inproceedings{book21,
  author    = {David Rolnick and Konrad P. Kording},
  title     = {Reverse-Engineering Deep {ReLU} Networks},
  booktitle = {Proc. Int. Conf. Mach. Learn. ({ICML})},
  volume    = {119},
  pages     = {8178--8187},
  year      = {2020},
  publisher = {PMLR}
}

@inproceedings{book22,
  author    = {Xiao Zhang and Dongrui Wu},
  title     = {Empirical Studies on the Properties of Linear Regions in Deep Neural Networks},
  booktitle = {Proc. Int. Conf. Learn. Represent. ({ICLR})},
  year      = {2020},
  publisher = {OpenReview.net}
}

@inproceedings{book23,
  author    = {Martin Trimmel and Henning Petzka and Cristian Sminchisescu},
  title     = {TropEx: An Algorithm for Extracting Linear Terms in Deep Neural Networks},
  booktitle = {Proc. Int. Conf. Learn. Represent. ({ICLR})},
  year      = {2021},
  publisher = {OpenReview.net}
}

@article{book24,
  author  = {Juncai He and Lin Li and Jinchao Xu and Chunyue Zheng},
  title   = {{ReLU} Deep Neural Networks and Linear Finite Elements},
  journal = {Journal of Computational Mathematics},
  volume  = {38},
  number  = {3},
  pages   = {502--527},
  year    = {2020},
  doi     = {10.4208/jcm.1901-m2018-0160}
}

@article{book25,
  author  = {Christoph Hertrich and Amitabh Basu and Marco Di Summa and Martin Skutella},
  title   = {Towards Lower Bounds on the Depth of {ReLU} Neural Networks},
  journal = {{SIAM} J. Discret. Math.},
  volume  = {37},
  number  = {2},
  pages   = {997--1029},
  year    = {2023},
  doi     = {10.1137/22M1489332}
}

@inproceedings{book26,
  author    = {Kuan-Lin Chen and Harinath Garudadri and Bhaskar D. Rao},
  title     = {Improved Bounds on Neural Complexity for Representing Piecewise Linear Functions},
  booktitle = {Advances in Neural Information Processing Systems},
  volume    = {35},
  year      = {2022}
}

@inproceedings{book27,
  author    = {Christian Haase and Christoph Hertrich and Georg Loho},
  title     = {Lower Bounds on the Depth of Integral {ReLU} Neural Networks via Lattice Polytopes},
  booktitle = {Proc. Int. Conf. Learn. Represent. ({ICLR})},
  year      = {2023},
  publisher = {OpenReview.net}
}

@inproceedings{book28,
  author    = {Gennadiy Averkov and Christopher Hojny and Maximilian Merkert},
  title     = {On the Expressiveness of Rational {ReLU} Neural Networks With Bounded Depth},
  booktitle = {Proc. Int. Conf. Learn. Represent. ({ICLR})},
  year      = {2025},
  publisher = {OpenReview.net}
}

@article{book29,
  author  = {Qiang Hu and Hao Zhang and Feifei Gao and Chengwen Xing and Jianping An},
  title   = {Analysis on the Number of Linear Regions of Piecewise Linear Neural Networks},
  journal = {{IEEE} Trans. Neural Netw. Learn. Syst.},
  volume  = {33},
  number  = {2},
  pages   = {644--653},
  year    = {2022},
  doi     = {10.1109/TNNLS.2020.3028431}
}

@inproceedings{book30,
  author    = {Max Milkert and David Hyde and Forrest J. Laine},
  title     = {Compelling {ReLU} Networks to Exhibit Exponentially Many Linear Regions at Initialization and During Training},
  booktitle = {Proc. Int. Conf. Mach. Learn. ({ICML})},
  year      = {2025},
  publisher = {OpenReview.net}
}

@inproceedings{book31,
  title={Deep residual learning for image recognition},
  author={He, Kaiming and Zhang, Xiangyu and Ren, Shaoqing and Sun, Jian},
  booktitle={Proceedings of the IEEE conference on computer vision and pattern recognition},
  pages={770--778},
  year={2016}
}

@inproceedings{book32,
  title={Batch normalization: Accelerating deep network training by reducing internal covariate shift},
  author={Ioffe, Sergey and Szegedy, Christian},
  booktitle={International conference on machine learning},
  pages={448--456},
  year={2015},
  organization={pmlr}
}

@inproceedings{book33,
  title={Imagenet: A large-scale hierarchical image database},
  author={Deng, Jia and Dong, Wei and Socher, Richard and Li, Li-Jia and Li, Kai and Fei-Fei, Li},
  booktitle={2009 IEEE conference on computer vision and pattern recognition},
  pages={248--255},
  year={2009},
  organization={Ieee}
}

@article{book34,
  title={Very deep convolutional networks for large-scale image recognition},
  author={Simonyan, Karen and Zisserman, Andrew},
  journal={arXiv preprint arXiv:1409.1556},
  year={2014}
}

@article{book35,
  title={An image is worth 16x16 words: Transformers for image recognition at scale},
  author={Dosovitskiy, Alexey},
  journal={arXiv preprint arXiv:2010.11929},
  year={2020}
}

@article{book36,
  title={Deep learning},
  author={LeCun, Yann and Bengio, Yoshua and Hinton, Geoffrey},
  journal={nature},
  volume={521},
  number={7553},
  pages={436--444},
  year={2015},
  publisher={Nature Publishing Group UK London}
}

@book{book37,
  title={Arrangements of hyperplanes},
  author={Orlik, Peter and Terao, Hiroaki},
  volume={300},
  year={2013},
  publisher={Springer Science \& Business Media}
}

@article{book38,
  title={Scikit-learn: Machine learning in Python},
  author={Pedregosa, Fabian and Varoquaux, Ga{\"e}l and Gramfort, Alexandre and Michel, Vincent and Thirion, Bertrand and Grisel, Olivier and Blondel, Mathieu and Prettenhofer, Peter and Weiss, Ron and Dubourg, Vincent and others},
  journal={the Journal of machine Learning research},
  volume={12},
  pages={2825--2830},
  year={2011},
  publisher={JMLR. org}
}

@inproceedings{book39,
  title={Comparative analysis of the linear regions in ReLU and LeakyReLU networks},
  author={Qi, Xuan and Wei, Yi and Mei, Xue and Chellali, Ryad and Yang, Shipin},
  booktitle={International Conference on Neural Information Processing},
  pages={528--539},
  year={2023},
  organization={Springer}
}

@inproceedings{book40,
  title={Empirical Study on the Effect of Residual Networks on the Expressiveness of Linear Regions},
  author={Qi, Xuan and Wei, Yi and Mei, Xue and Chellali, Ryad and Yang, Shipin},
  booktitle={International Conference on Artificial Neural Networks},
  pages={174--185},
  year={2023},
  organization={Springer}
}

@article{book41,
  title={The Evolution of the Interplay Between Input Distributions and Linear Regions in Networks},
  author={Qi, Xuan and Wei, Yi},
  journal={arXiv preprint arXiv:2310.18725},
  year={2023}
}
\clearpage
\appendix

\section{Proof of Theorem~\ref{thm:monotone-cells}}
\label{Proof1}
\begin{assumption}[Hereditary local non-degeneracy inside $P_\varepsilon(x)$]
\label{assump:local-nondegeneracy}
Let $P:=P_\varepsilon(x)\subset\mathbb{R}^d$ be a convex closed polytope with
$\mathrm{int}(P)\neq\varnothing$.
Let $\mathcal{J}$ be a finite index set and let $\{z_j\}_{j\in\mathcal{J}}$ be
continuous piecewise-affine (CPA) functions. For each $j$, write
$Z_j:=\{y\in\mathbb{R}^d:\ z_j(y)=0\}$.

We say $\mathcal{J}$ satisfies \emph{hereditary local non-degeneracy} inside $P$
if the following holds for every subfamily $\mathcal{K}\subseteq\mathcal{J}$.

\begin{enumerate}
\item[\textbf{(A1)}] \textbf{Finite CPA complexity on $P$.}
Each $z_j$ is CPA on an open neighborhood of $P$.
Moreover, there exists a finite polyhedral subdivision of $P$ into
full-dimensional open polytopes such that, on each polytope, $z_j$ is affine.

\item[\textbf{(A2)}] \textbf{Regular (non-fold) traces that cut the interior.}
For every $j\in\mathcal{K}$ and every cell $R\in\mathcal{P}_{\mathcal{K}\setminus\{j\}}$ with
$R\cap \mathrm{int}(P)\neq\varnothing$, the set
$Z_j\cap R\cap \mathrm{int}(P)$ is either empty, or else it is a nonempty
$(d-1)$-dimensional affine hyperplane patch $\Gamma_{j,R}$ satisfying:

\begin{enumerate}
\item[\textbf{(i)}] \textbf{Interior placement.}
\begin{equation}
\mathrm{relint}(\Gamma_{j,R})\subset \mathrm{int}\bigl(R\cap P\bigr).
\end{equation}

\item[\textbf{(ii)}] \textbf{Local single-affine representation with nonzero normal.}
There exist $a_{j,R}\in\mathbb{R}^d\setminus\{0\}$, $c_{j,R}\in\mathbb{R}$, and an
open neighborhood $U_{j,R}\subset R\cap \mathrm{int}(P)$ of
$\mathrm{relint}(\Gamma_{j,R})$ such that
\begin{align}
z_j(y) &= a_{j,R}^\top y + c_{j,R},
&& \forall\,y\in U_{j,R},\\
\Gamma_{j,R}\cap U_{j,R}
&= \bigl\{y\in U_{j,R}:\ a_{j,R}^\top y + c_{j,R}=0\bigr\}.
\end{align}

In particular, $\Gamma_{j,R}$ is not a fold trace (e.g., it excludes
$z(y)=|y_1|$ along $\{y_1=0\}$), and the trace locally separates $U_{j,R}$ into
two nonempty strict-sign sides.
\end{enumerate}

\item[\textbf{(A3)}] \textbf{Non-redundancy of intersecting traces (essentiality).}
For every $j\in\mathcal{K}$ with $Z_j\cap \mathrm{int}(P)\neq\varnothing$,
there exists a point
\begin{equation}
y\in Z_j\cap \mathrm{int}(P)
\quad\text{such that}\quad
z_k(y)\neq 0\ \ \forall\,k\in\mathcal{K}\setminus\{j\}.
\end{equation}

Equivalently,
\begin{equation}
Z_j\cap \mathrm{int}(P)
\not\subseteq
\bigcup_{k\in\mathcal{K}\setminus\{j\}} Z_k.
\end{equation}

\end{enumerate}
\end{assumption}

\begin{lemma}[Counting cells intersecting $P$ vs.\ $\mathrm{int}(P)$]
\label{lem:N-eq-Ncirc}
Let $P\subset\mathbb{R}^d$ be a convex closed polytope with $\mathrm{int}(P)\neq\varnothing$, and let
$\mathcal{P}$ be any collection of full-dimensional open cells. Define
\begin{equation}
N(P;\mathcal{P}) := \#\{R\in\mathcal{P}:\; R\cap P\neq\varnothing\},\qquad
N^\circ(P;\mathcal{P}) := \#\{R\in\mathcal{P}:\; R\cap \mathrm{int}(P)\neq\varnothing\}.
\end{equation}

Then $N(P;\mathcal{P})=N^\circ(P;\mathcal{P})$.
\end{lemma}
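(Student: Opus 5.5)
The statement to prove is Lemma~\ref{lem:N-eq-Ncirc}. One inequality is trivial: $\mathrm{int}(P)\subseteq P$, so every cell meeting $\mathrm{int}(P)$ also meets $P$. The two index sets are therefore nested, and $N^\circ(P;\mathcal{P})\le N(P;\mathcal{P})$. Both counts may be infinite if $\mathcal{P}$ is infinite. I will treat the statement as equality of the sets
\[
\{R\in\mathcal{P}:R\cap P\neq\varnothing\}
\quad\text{and}\quad
\{R\in\mathcal{P}:R\cap\mathrm{int}(P)\neq\varnothing\},
\]
which gives equality of cardinalities in all cases. So the substance is the reverse inclusion: if an open cell $R$ meets $P$, it already meets $\mathrm{int}(P)$.

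For the reverse inclusion, take $R\in\mathcal{P}$ and a point $p\in R\cap P$. Because $R$ is open, there is $r>0$ with the open ball $B(p,r)\subseteq R$. By hypothesis $\mathrm{int}(P)\neq\varnothing$, so fix some $x_0\in\mathrm{int}(P)$. The plan rests on the standard convexity fact (the "accessibility" or line-segment lemma): for a convex set $P$, if $x_0\in\mathrm{int}(P)$ and $p\in\overline{P}=P$, then every point $p_\lambda:=(1-\lambda)p+\lambda x_0$ with $\lambda\in(0,1]$ lies in $\mathrm{int}(P)$.

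To prove the line-segment fact, choose $\rho>0$ with $B(x_0,\rho)\subseteq P$. Then $(1-\lambda)p+\lambda B(x_0,\rho)=B(p_\lambda,\lambda\rho)$, and convexity of $P$ places this ball inside $P$. Hence $p_\lambda\in\mathrm{int}(P)$.

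With that fact in hand, choose $\lambda>0$ small enough that $\|p_\lambda-p\|=\lambda\|x_0-p\|<r$. Then $p_\lambda\in B(p,r)\subseteq R$ and $p_\lambda\in\mathrm{int}(P)$, so $R\cap\mathrm{int}(P)\neq\varnothing$. This gives the reverse inclusion, and therefore the equality of the two counts.

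I expect no real obstacle here; the only step that needs care is the line-segment lemma. The proof uses only that $R$ is open and that $P$ is convex with nonempty interior. Full-dimensionality of the cells and the polytope structure of $P$ play no role.
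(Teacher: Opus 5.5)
Your proof is correct and follows the paper's argument: pick a point of $R\cap P$, use openness of $R$ to get a ball inside $R$, and find a point of $\mathrm{int}(P)$ in that ball. The one difference is that you prove, via the line-segment lemma, the step the paper simply asserts (that every neighborhood of a boundary point of $P$ meets $\mathrm{int}(P)$); this is exactly where convexity and $\mathrm{int}(P)\neq\varnothing$ are needed, so your version is slightly more complete.
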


\begin{proof}
Trivially $N^\circ(P;\mathcal{P})\le N(P;\mathcal{P})$ since $\mathrm{int}(P)\subseteq P$.
For the reverse direction, fix an open full-dimensional cell $R\in\mathcal{P}$ with $R\cap P\neq\varnothing$,
and choose $y\in R\cap P$. Since $R$ is open, there exists $r>0$ such that $B(y,r)\subset R$.
If $y\in \mathrm{int}(P)$ we are done. If $y\in\partial P$, then every neighborhood of $y$ intersects
$\mathrm{int}(P)$, hence $B(y,r)\cap \mathrm{int}(P)\neq\varnothing$ and thus $R\cap \mathrm{int}(P)\neq\varnothing$.
Therefore $N(P;\mathcal{P})\le N^\circ(P;\mathcal{P})$.
\end{proof}


\begin{proof}[Proof of Theorem~\ref{thm:monotone-cells}]
We first argue for a general convex closed polytope $P\subset\mathbb{R}^d$ with $\mathrm{int}(P)\neq\varnothing$,
and then specialize to $P=P_\varepsilon(x)$.

\paragraph{Step 0 (Notation and basic facts).}
For any index family $\mathcal{J}$, let $\mathcal{P}_{\mathcal{J}}$ denote the sign partition of $\mathbb{R}^d$
induced by $\{z_j\}_{j\in\mathcal{J}}$ (as in the theorem statement). Define
\begin{equation}
\mathcal{S}(P;\mathcal{J}) := \{j\in\mathcal{J}:\; Z_j\cap \mathrm{int}(P)\neq\varnothing\},
\qquad
I(P;\mathcal{J}) := |\mathcal{S}(P;\mathcal{J})|.
\end{equation}

and
\begin{equation}
N(P;\mathcal{J}) := \#\{R\in\mathcal{P}_{\mathcal{J}}:\; R\cap P\neq\varnothing\}.
\end{equation}

By Lemma~\ref{lem:N-eq-Ncirc}, we may equivalently count intersections with $\mathrm{int}(P)$:
\begin{equation}
\label{eq:N-intP}
N(P;\mathcal{J}) = \#\{R\in\mathcal{P}_{\mathcal{J}}:\; R\cap \mathrm{int}(P)\neq\varnothing\}.
\end{equation}

We also record a monotonicity property under hereditary non-degeneracy.
Assume additionally that $\mathcal{J}'$ satisfies Assumption~\ref{assump:local-nondegeneracy}
inside $P$. If $\mathcal{J}\subseteq\mathcal{J}'$, then $\mathcal{P}_{\mathcal{J}'}$ refines
$\mathcal{P}_{\mathcal{J}}$ in the sense that every $R'\in\mathcal{P}_{\mathcal{J}'}$ is contained
in some $R\in\mathcal{P}_{\mathcal{J}}$ (since fixing signs for $\mathcal{J}'$ fixes signs for $\mathcal{J}$).
Consequently,
\begin{equation}
\label{eq:monotone-N}
N(P;\mathcal{J}')\ \ge\ N(P;\mathcal{J}).
\end{equation}
Indeed, fix any $R\in\mathcal{P}_{\mathcal{J}}$ with $R\cap \mathrm{int}(P)\neq\varnothing$.
Because $\mathcal{J}'$ is hereditarily non-degenerate, for each $j\in\mathcal{J}'\setminus\mathcal{J}$,
applying Assumption~\ref{assump:local-nondegeneracy}(A2) to the singleton subfamily $\mathcal{K}=\{j\}$
shows that $Z_j\cap \mathrm{int}(P)$ is either empty or a $(d-1)$-dimensional affine hyperplane patch,
hence has empty interior in $\mathrm{int}(P)$. Therefore the finite union
$\bigcup_{j\in\mathcal{J}'\setminus\mathcal{J}} \bigl(Z_j\cap \mathrm{int}(P)\bigr)$ has empty interior in
$R\cap \mathrm{int}(P)$, so we can choose
\begin{equation}
y\in \bigl(R\cap \mathrm{int}(P)\bigr)\setminus \bigcup_{j\in\mathcal{J}'\setminus\mathcal{J}} Z_j.
\end{equation}

Then $\mathrm{sign}(z_j(y))$ is well-defined for all $j\in\mathcal{J}'$, hence $y$ belongs to a unique
$R'\in\mathcal{P}_{\mathcal{J}'}$, and necessarily $R'\subseteq R$ and $R'\cap \mathrm{int}(P)\neq\varnothing$.
Thus each intersecting $R$ contributes at least one intersecting $R'$, proving \eqref{eq:monotone-N}.

\paragraph{Step 1 (One additional zero-set intersection increases the count by at least $1$).}
Let $j^\star\notin\mathcal{J}$ and set $\mathcal{J}':=\mathcal{J}\cup\{j^\star\}$.
Assume there exists a cell $R\in\mathcal{P}_{\mathcal{J}}$ such that
$Z_{j^\star}\cap R\cap \mathrm{int}(P)\neq\varnothing$ and there exist
$y^\pm\in R\cap \mathrm{int}(P)$ with $\pm z_{j^\star}(y^\pm)>0$.
Assume additionally that $\mathcal{J}'$ satisfies Assumption~\ref{assump:local-nondegeneracy} inside $P$.

Set
\begin{equation}
U := R\cap \mathrm{int}(P),
\qquad
U^+ := \{y\in U:\ z_{j^\star}(y)>0\},
\qquad
U^- := \{y\in U:\ z_{j^\star}(y)<0\}.
\end{equation}

Since $U$ is open and $z_{j^\star}$ is continuous, $U^+$ and $U^-$ are open in $U$ and disjoint.
By assumption, $z_{j^\star}$ takes both positive and negative values on $U$, hence
$U^+$ and $U^-$ are both nonempty.

We now show that $U$ intersects at least two distinct $\mathcal{J}'$-cells.
By definition of $\mathcal{P}_{\mathcal{J}}$, the signs of $\{z_j\}_{j\in\mathcal{J}}$ are constant on $R$,
hence constant on $U\subset R$. Therefore, within $U$ the only additional sign constraint distinguishing
$\mathcal{P}_{\mathcal{J}'}$ from $\mathcal{P}_{\mathcal{J}}$ is the sign of $z_{j^\star}$; in particular,
every $\mathcal{J}'$-cell contained in $R$ lies entirely in either $U^+$ or $U^-$.

Pick $y^+\in U^+$ and $y^-\in U^-$. Each of $y^+,y^-$ belongs to a unique cell in $\mathcal{P}_{\mathcal{J}'}$,
call them $R^+,R^-\in\mathcal{P}_{\mathcal{J}'}$. Necessarily $R^+\subseteq U^+$ and $R^-\subseteq U^-$, so
$R^+\neq R^-$. Moreover, $R^\pm\cap \mathrm{int}(P)\neq\varnothing$ because $y^\pm\in \mathrm{int}(P)$.

Thus, the single $\mathcal{J}$-cell $R$ that intersects $\mathrm{int}(P)$ contributes at least two distinct
$\mathcal{J}'$-cells intersecting $\mathrm{int}(P)$, while every other $\mathcal{J}$-cell intersecting
$\mathrm{int}(P)$ contributes at least one such $\mathcal{J}'$-cell (by \eqref{eq:monotone-N}).
Therefore,
\begin{equation}
N(P;\mathcal{J}')\ \ge\ N(P;\mathcal{J})+1.
\end{equation}

which proves Theorem~\ref{thm:monotone-cells}\textbf{(i)} for a general polytope $P$.

\paragraph{Step 2 (Lower bound $N\ge 1+I$).}
Let $\mathcal{S}(P;\mathcal{J})=\{j_1,\dots,j_I\}$ with $I:=I(P;\mathcal{J})$. Consider the increasing sequence
\begin{equation}
\mathcal{J}^{(0)}:=\varnothing,\qquad
\mathcal{J}^{(t)}:=\{j_1,\dots,j_t\}\quad (t=1,\dots,I).
\end{equation}

Clearly, $\mathcal{P}_{\mathcal{J}^{(0)}}=\{\mathbb{R}^d\}$ and hence $N(P;\mathcal{J}^{(0)})=1$.

We claim that for each $t=1,\dots,I$,
\begin{equation}
\label{eq:induction-step-strong}
N(P;\mathcal{J}^{(t)})\ \ge\ N(P;\mathcal{J}^{(t-1)})+1.
\end{equation}
Fix such $t$ and write $\widetilde{\mathcal{J}}:=\mathcal{J}^{(t-1)}$ and $j^\star:=j_t$, so that
$\mathcal{J}^{(t)}=\widetilde{\mathcal{J}}\cup\{j^\star\}$.

Because $j^\star\in \mathcal{S}(P;\mathcal{J})$, we have $Z_{j^\star}\cap \mathrm{int}(P)\neq\varnothing$.
Apply Assumption~\ref{assump:local-nondegeneracy}(A3) to the subfamily $\mathcal{K}=\mathcal{J}^{(t)}$:
there exists
\begin{equation}
y\in Z_{j^\star}\cap \mathrm{int}(P)
\quad\text{such that}\quad
z_k(y)\neq 0\ \ \forall\,k\in \widetilde{\mathcal{J}}.
\end{equation}

In particular, $y\in \mathrm{int}(P)$ and $y$ is not on any zero set $Z_k$ with $k\in \widetilde{\mathcal{J}}$.
Therefore, the strict sign pattern $\{\mathrm{sign}(z_k(y))\}_{k\in \widetilde{\mathcal{J}}}$ is well-defined.
Let
\begin{equation}
S_y := \Bigl\{u\in\mathbb{R}^d:\ \mathrm{sign}(z_k(u))=\mathrm{sign}(z_k(y))\ \ \forall k\in \widetilde{\mathcal{J}}\Bigr\}.
\end{equation}

By continuity of each $z_k$ ($k\in\widetilde{\mathcal{J}}$), the set $S_y$ is open, and $y\in S_y$.
Let $R$ be the unique connected component of $S_y$ containing $y$; then $R\in\mathcal{P}_{\widetilde{\mathcal{J}}}$ and
$y\in R\cap \mathrm{int}(P)$.

We now verify the hypothesis of Theorem~\ref{thm:monotone-cells}\textbf{(i)} for the pair
$(\widetilde{\mathcal{J}},j^\star)$. We already have $y\in Z_{j^\star}\cap R\cap \mathrm{int}(P)$, hence
$Z_{j^\star}\cap R\cap \mathrm{int}(P)\neq\varnothing$.

By Assumption~\ref{assump:local-nondegeneracy}(A2) applied to the subfamily
$\mathcal{K}=\widetilde{\mathcal{J}}\cup\{j^\star\}$, the set
$Z_{j^\star}\cap R\cap \mathrm{int}(P)$ is a nonempty $(d-1)$-dimensional affine hyperplane patch
$\Gamma_{j^\star,R}$ whose relative interior lies in $\mathrm{int}(R\cap P)$.
Moreover, Assumption~\ref{assump:local-nondegeneracy}(A2)(ii) gives an open neighborhood
$U_{j^\star,R}\subset R\cap \mathrm{int}(P)$ of $\mathrm{relint}(\Gamma_{j^\star,R})$ on which
$z_{j^\star}$ coincides with a nondegenerate affine function. Hence $z_{j^\star}$ takes both positive and
negative values on $U_{j^\star,R}\subset R\cap \mathrm{int}(P)$, so there exist
$y^\pm\in R\cap \mathrm{int}(P)$ such that $\pm z_{j^\star}(y^\pm)>0$.

Thus the hypotheses of Step~1 hold for the pair $(\widetilde{\mathcal{J}},j^\star)$, and we conclude
\begin{equation}
N(P;\mathcal{J}^{(t)})\ \ge\ N(P;\widetilde{\mathcal{J}})+1\ =\ N(P;\mathcal{J}^{(t-1)})+1.
\end{equation}

which establishes \eqref{eq:induction-step-strong}.

Summing \eqref{eq:induction-step-strong} over $t=1,\dots,I$ yields
\begin{equation}
N(P;\mathcal{J}^{(I)})\ \ge\ 1 + I.
\end{equation}

Finally, $\mathcal{J}^{(I)}=\mathcal{S}(P;\mathcal{J})\subseteq \mathcal{J}$, so by monotonicity \eqref{eq:monotone-N},
\begin{equation}
N(P;\mathcal{J})\ \ge\ N(P;\mathcal{J}^{(I)})\ \ge\ 1+I(P;\mathcal{J}).
\end{equation}

which proves Theorem~\ref{thm:monotone-cells}(ii) for general $P$.

\paragraph{Step 3 (Upper bound under local general position).}
Assume the hypotheses of Theorem~\ref{thm:monotone-cells}(iii).
Let $n:=I(P;\mathcal{J})$ and enumerate the intersecting indices as
$\mathcal{S}(P;\mathcal{J})=\{j_1,\dots,j_n\}$.
For each $r\in\{1,\dots,n\}$, by hypothesis the trace
$T_r := Z_{j_r}\cap \mathrm{int}(P)$ is a single $(d-1)$-dimensional affine
hyperplane patch. Let $H_r$ denote the unique affine hyperplane containing $T_r$.

Let $\mathcal{Q}$ be the (open) region decomposition of $\mathbb{R}^d$ induced by
the affine hyperplanes $\{H_1,\dots,H_n\}$, i.e., the full-dimensional open
connected components of
$\mathbb{R}^d\setminus \bigcup_{r=1}^n H_r$.
Define the local arrangement count
\begin{equation}
N_{\mathrm{arr}}(P) := \#\{Q\in\mathcal{Q}:\ Q\cap \mathrm{int}(P)\neq\varnothing\}.
\end{equation}

\emph{Claim:} the arrangement partition $\mathcal{Q}$ restricted to $\mathrm{int}(P)$
refines the sign partition $\mathcal{P}_{\mathcal{J}}$ restricted to $\mathrm{int}(P)$.
Consequently,
\begin{equation}
\label{eq:N-le-arr}
N(P;\mathcal{J})\ \le\ N_{\mathrm{arr}}(P).
\end{equation}

To prove the claim, fix any $Q\in\mathcal{Q}$ with $Q\cap \mathrm{int}(P)\neq\varnothing$.
Then $Q$ is open and connected, and for each $r$ we have $Q\cap H_r=\varnothing$.
Since $T_r\subseteq H_r$, it follows that $Q\cap T_r=\varnothing$, hence
$z_{j_r}$ is continuous and nonzero on $Q\cap \mathrm{int}(P)$, so its sign is
constant there. For any index $j\in\mathcal{J}\setminus \mathcal{S}(P;\mathcal{J})$,
we have $Z_j\cap \mathrm{int}(P)=\varnothing$, so by continuity $z_j$ has constant
strict sign on the connected set $\mathrm{int}(P)$ (and thus on $Q\cap \mathrm{int}(P)$).
Therefore the full sign pattern $\{\mathrm{sign}(z_j)\}_{j\in\mathcal{J}}$ is constant on
$Q\cap \mathrm{int}(P)$, which implies that $Q\cap \mathrm{int}(P)$ is contained
in a single sign cell $R\in\mathcal{P}_{\mathcal{J}}$.
Hence each such $Q$ lies inside some $R$, i.e., the arrangement partition refines
the sign partition on $\mathrm{int}(P)$, proving \eqref{eq:N-le-arr}.

It remains to bound $N_{\mathrm{arr}}(P)$. Since restricting to $\mathrm{int}(P)$
cannot create more regions than the global arrangement, we have
\begin{equation}
N_{\mathrm{arr}}(P)\ \le\ \#\bigl(\text{regions of } \mathbb{R}^d\setminus \cup_{r=1}^n H_r\bigr).
\end{equation}

For an arrangement of $n$ affine hyperplanes in $\mathbb{R}^d$, the number of regions
is always upper bounded by $\sum_{k=0}^d \binom{n}{k}$ (with equality under general
position)~\cite{book37}. Therefore,
\begin{equation}
N(P;\mathcal{J})\ \le\ N_{\mathrm{arr}}(P)\ \le\ \sum_{k=0}^d \binom{n}{k}.
\end{equation}

which proves Theorem~\ref{thm:monotone-cells}(iii).

\paragraph{Step 4 (Specialize to $P_\varepsilon(x)$).}
Taking $P=P_\varepsilon(x)$ gives $I_\varepsilon(x;\mathcal{J})=I(P_\varepsilon(x);\mathcal{J})$ and
$N_\varepsilon(x;\mathcal{J})=N(P_\varepsilon(x);\mathcal{J})$, completing the proof.
\end{proof}

\section{Proof of Theorem~\ref{thm:distance-intersection}}
\label{Proof2}
\begin{proof}
We proceed in two steps. First we show that the directional thickness
$\delta_\varepsilon(x;v)$ is strictly positive for every nonzero direction
$v\in\mathbb{R}^d$. We then prove that condition
\eqref{eq:intersection-condition-polytope} implies that the neuron zero set
$Z_{\ell,i}$ intersects $\mathrm{int}\bigl(P_\varepsilon(x)\bigr)$.

\paragraph{Step 1: Positivity of the directional thickness.}
Since $x\in\mathrm{int}\bigl(P_\varepsilon(x)\bigr)$ and
$\mathrm{int}\bigl(P_\varepsilon(x)\bigr)$ is an open subset of $\mathbb{R}^d$,
there exists $\eta>0$ such that
\begin{equation}
\label{eq:ball-in-polytope}
\bigl\{y\in\mathbb{R}^d : \|y-x\| < \eta\bigr\}
\;\subset\;
\mathrm{int}\bigl(P_\varepsilon(x)\bigr).
\end{equation}
Let $v\in\mathbb{R}^d$ with $v\neq 0$ and define the unit vector
\begin{equation}
u_v := \frac{v}{\|v\|}.
\end{equation}
For every $s\in[-\eta,\eta]$ we have
\begin{equation}
\bigl\| x + s\,u_v - x \bigr\|
=
|s|\,\|u_v\|
=
|s|
\;<\;
\eta,
\end{equation}
and hence, by \eqref{eq:ball-in-polytope},
\begin{equation}
x + s\,u_v \in \mathrm{int}\bigl(P_\varepsilon(x)\bigr)
\quad\text{for all } s\in[-\eta,\eta].
\end{equation}
By the definition of the directional thickness,
\begin{equation}
\delta_\varepsilon(x;v)
=
\sup\Bigl\{ t>0 : x + s\,u_v \in \mathrm{int}\bigl(P_\varepsilon(x)\bigr)
\;\text{for all } s\in[-t,t]\Bigr\},
\end{equation}
this implies
\begin{equation}
\delta_\varepsilon(x;v) \;\ge\; \eta \;>\; 0.
\end{equation}
Thus $\delta_\varepsilon(x;v)>0$ for all $v\neq 0$.

\paragraph{Step 2: Implication of \eqref{eq:intersection-condition-polytope}.}
Fix a neuron $(\ell,i)$ and recall that, on $R_{\mathrm{parent}}(x)$, the
pre-activation admits the affine representation
\begin{equation}
z_{\ell,i}(y)
=
a_{\ell,i}^\top y + c_{\ell,i},
\qquad y\in R_{\mathrm{parent}}(x),
\end{equation}
with $a_{\ell,i}\neq 0$ by assumption. The corresponding input-space hyperplane is
\begin{equation}
H_{\ell,i}
:=
\bigl\{y\in\mathbb{R}^d : a_{\ell,i}^\top y + c_{\ell,i} = 0\bigr\}.
\end{equation}
By definition of the neuron zero set,
\begin{equation}
Z_{\ell,i}
=
\bigl\{y\in\mathbb{R}^d : z_{\ell,i}(y)=0\bigr\},
\end{equation}
and on $R_{\mathrm{parent}}(x)$ we therefore have
\begin{equation}
Z_{\ell,i}\cap R_{\mathrm{parent}}(x)
=
H_{\ell,i}\cap R_{\mathrm{parent}}(x).
\end{equation}

We distinguish two cases depending on the value of $z_{\ell,i}(x)$.

\medskip
\noindent\emph{Case 1: $z_{\ell,i}(x)=0$.}
In this case $x\in Z_{\ell,i}$, and since
$x\in\mathrm{int}\bigl(P_\varepsilon(x)\bigr)$ by assumption, we directly obtain
\begin{equation}
x\in Z_{\ell,i}\cap\mathrm{int}\bigl(P_\varepsilon(x)\bigr),
\end{equation}
so $Z_{\ell,i}\cap\mathrm{int}\bigl(P_\varepsilon(x)\bigr)\neq\varnothing$ and the
conclusion holds.

\medskip
\noindent\emph{Case 2: $z_{\ell,i}(x)\neq 0$.}
Define the unit normal
\begin{equation}
u := \frac{a_{\ell,i}}{\|a_{\ell,i}\|},
\end{equation}
and recall the point-to-hyperplane distance formula:
\begin{equation}
\mathrm{dist}\bigl(x,H_{\ell,i}\bigr)
=
\frac{\bigl|a_{\ell,i}^\top x + c_{\ell,i}\bigr|}
     {\|a_{\ell,i}\|}
=
\frac{|z_{\ell,i}(x)|}{\|a_{\ell,i}\|}.
\end{equation}
Set
\begin{equation}
d := \mathrm{dist}\bigl(x,H_{\ell,i}\bigr)
=
\frac{|z_{\ell,i}(x)|}{\|a_{\ell,i}\|}.
\end{equation}
We will construct a point $y^\star\in H_{\ell,i}\cap\mathrm{int}\bigl(P_\varepsilon(x)\bigr)$.

Since $z_{\ell,i}(x)\neq 0$, the sign function
\begin{equation}
\operatorname{sign}\bigl(z_{\ell,i}(x)\bigr)
\in\{-1,+1\}
\end{equation}
is well-defined. Using condition \eqref{eq:intersection-condition-polytope} with
$v=a_{\ell,i}$, we obtain
\begin{equation}
d
=
\frac{|z_{\ell,i}(x)|}{\|a_{\ell,i}\|}
\;<\;
\delta_\varepsilon\bigl(x;a_{\ell,i}\bigr).
\end{equation}
Define
\begin{equation}
\delta := \delta_\varepsilon\bigl(x;a_{\ell,i}\bigr),
\end{equation}
so that $0<d<\delta$. Let
\begin{equation}
S
:=
\Bigl\{\,t>0:\ x+s\,u \in \mathrm{int}\bigl(P_\varepsilon(x)\bigr)
\ \text{for all } s\in[-t,t]\Bigr\},
\end{equation}
where $u:=a_{\ell,i}/\|a_{\ell,i}\|$. By definition of the directional thickness,
we have $\delta=\sup S$. Set
\begin{equation}
\epsilon := \frac{\delta-d}{2} \;>\; 0.
\end{equation}
By the defining property of the supremum, there exists $t_S\in S$ such that
\begin{equation}
t_S \;>\; \delta-\epsilon \;=\; \frac{\delta+d}{2} \;>\; d.
\end{equation}
Consequently,
\begin{equation}
x+s\,u \in \mathrm{int}\bigl(P_\varepsilon(x)\bigr)
\quad\text{for all } s\in[-t_S,t_S].
\end{equation}

Now define
\begin{equation}
y^\star := x - \operatorname{sign}\bigl(z_{\ell,i}(x)\bigr)\,d\,u.
\end{equation}
Since $|d|<t_S$, we have
\begin{equation}
-\operatorname{sign}\bigl(z_{\ell,i}(x)\bigr)\,d \in [-t_S,t_S],
\end{equation}
and hence
\begin{equation}
y^\star = x + s^\star u \in \mathrm{int}\bigl(P_\varepsilon(x)\bigr),
\end{equation}
where
\begin{equation}
s^\star := -\operatorname{sign}\bigl(z_{\ell,i}(x)\bigr)\,d.
\end{equation}

It remains to show that $y^\star\in H_{\ell,i}$. Using
$u=a_{\ell,i}/\|a_{\ell,i}\|$ and $d=|z_{\ell,i}(x)|/\|a_{\ell,i}\|$, we compute
\begin{equation}
\begin{aligned}
a_{\ell,i}^\top y^\star + c_{\ell,i}
&=
a_{\ell,i}^\top\Bigl(x - \operatorname{sign}\bigl(z_{\ell,i}(x)\bigr)\,d\,u\Bigr)
+ c_{\ell,i}
\\
&=
\bigl(a_{\ell,i}^\top x + c_{\ell,i}\bigr)
-
\operatorname{sign}\bigl(z_{\ell,i}(x)\bigr)\,d\,a_{\ell,i}^\top u
\\
&=
z_{\ell,i}(x)
-
\operatorname{sign}\bigl(z_{\ell,i}(x)\bigr)\,
\frac{|z_{\ell,i}(x)|}{\|a_{\ell,i}\|}\,
\|a_{\ell,i}\|
\\
&=
z_{\ell,i}(x)
-
\operatorname{sign}\bigl(z_{\ell,i}(x)\bigr)\,|z_{\ell,i}(x)|
\\
&= 0,
\end{aligned}
\end{equation}
since $z_{\ell,i}(x)
=
\operatorname{sign}\bigl(z_{\ell,i}(x)\bigr)\,|z_{\ell,i}(x)|$ when
$z_{\ell,i}(x)\neq 0$. Thus $y^\star\in H_{\ell,i}$, and combining this with the
previous inclusion yields
\begin{equation}
y^\star \in H_{\ell,i}\cap \mathrm{int}\bigl(P_\varepsilon(x)\bigr)
\subset Z_{\ell,i}\cap \mathrm{int}\bigl(P_\varepsilon(x)\bigr).
\end{equation}
In particular,
\begin{equation}
Z_{\ell,i}\cap\mathrm{int}\bigl(P_\varepsilon(x)\bigr)\neq\varnothing.
\end{equation}

\medskip
In both cases, we have shown that condition
\eqref{eq:intersection-condition-polytope} implies
$Z_{\ell,i}\cap\mathrm{int}\bigl(P_\varepsilon(x)\bigr)\neq\varnothing$, which
completes the proof of Theorem~\ref{thm:distance-intersection}.
\end{proof}

\begin{figure*}[ht]
  \centering
  \begin{subfigure}[b]{0.98\textwidth}
    \centering
    \includegraphics[width=\textwidth]{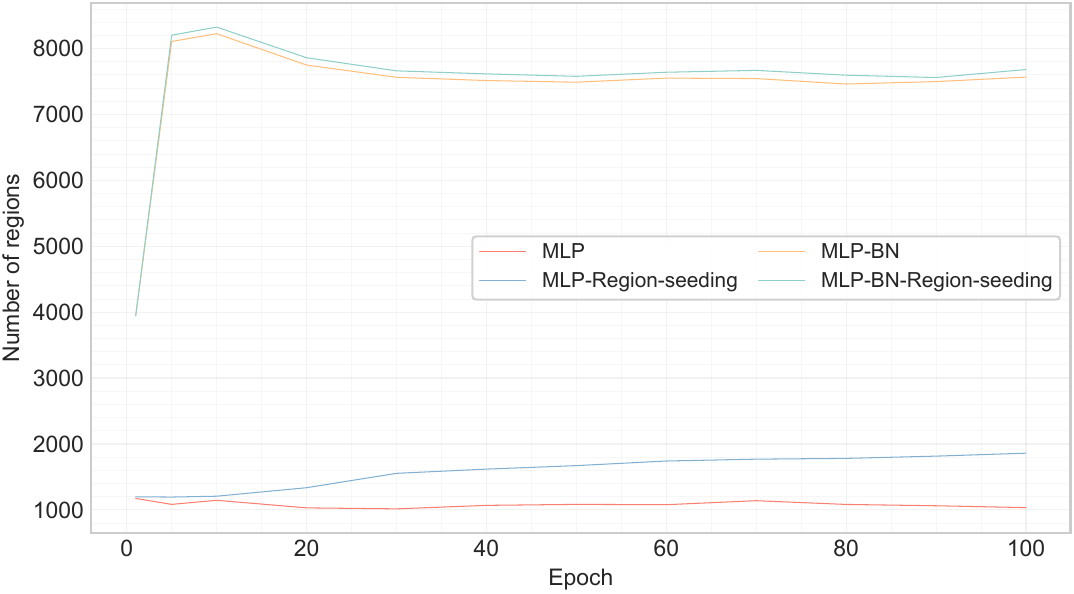}
    \caption{Feedforward MLP}
    \label{fig:moon_regions_vis_mlp}
  \end{subfigure}
  \\[10pt] 
  \begin{subfigure}[b]{0.98\textwidth}
    \centering
    \includegraphics[width=\textwidth]{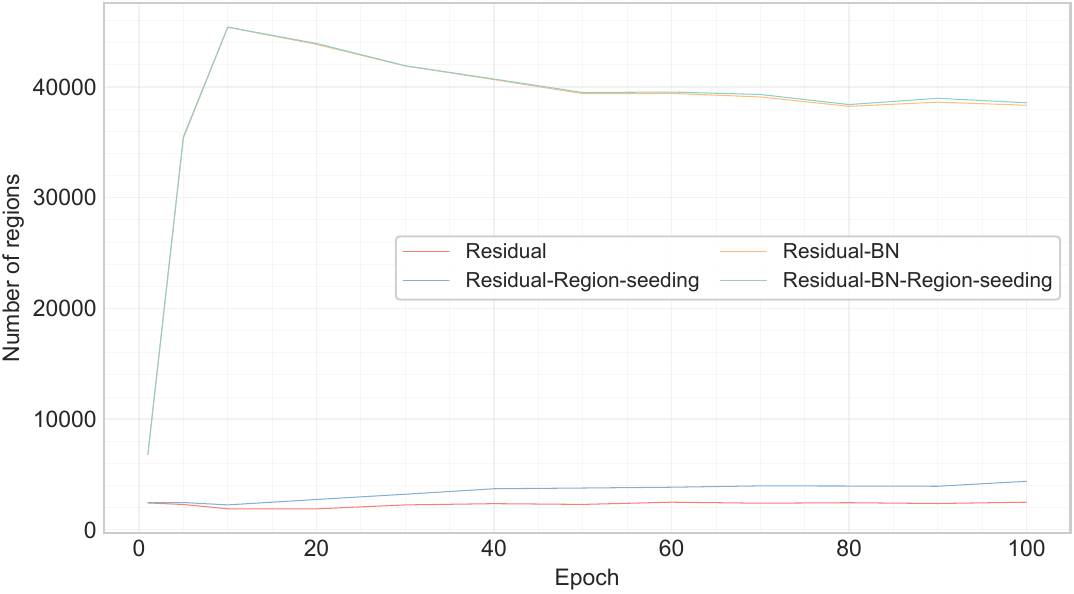}
    \caption{Residual connections}
    \label{fig:moon_regions_vis_residual}
  \end{subfigure}
  \caption{Two Moons dataset: exact affine-region partition visualizations. Comparison of baseline and region-seeded partitions for MLP and ResNet architectures. The region-seeded models demonstrate more intricate partitioning.}
  \label{fig:moon_regions_vis}
\end{figure*}
\begin{figure*}[ht]
  \centering
  \begin{subfigure}[t]{0.98\textwidth}
    \centering
    \includegraphics[width=\textwidth]{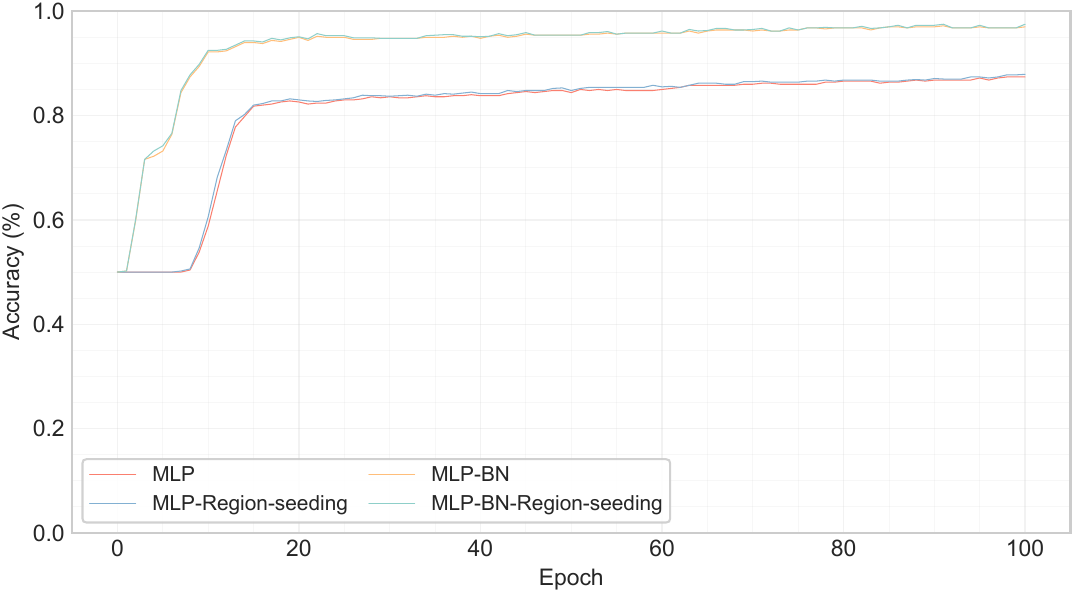}
    \caption{Feedforward MLP}
    \label{fig:moon_acc_mlp}
  \end{subfigure}
  \hfill
  \begin{subfigure}[t]{0.98\textwidth}
    \centering
    \includegraphics[width=\textwidth]{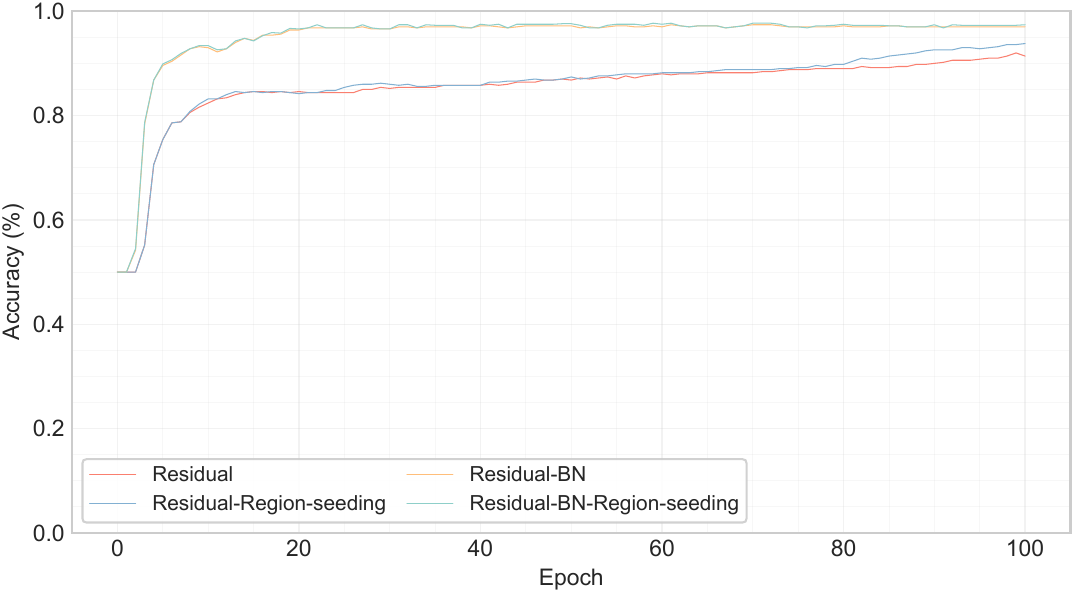}
    \caption{Residual connections}
    \label{fig:moon_acc_res}
  \end{subfigure}
  \caption{Two Moons dataset: test accuracy over training epochs. The regularized models show improved performance.}
  \label{fig:moon_acc}
\end{figure*}

\begin{figure*}[ht]
  \centering
  \begin{subfigure}[t]{0.98\textwidth}
    \centering
    \includegraphics[width=\textwidth]{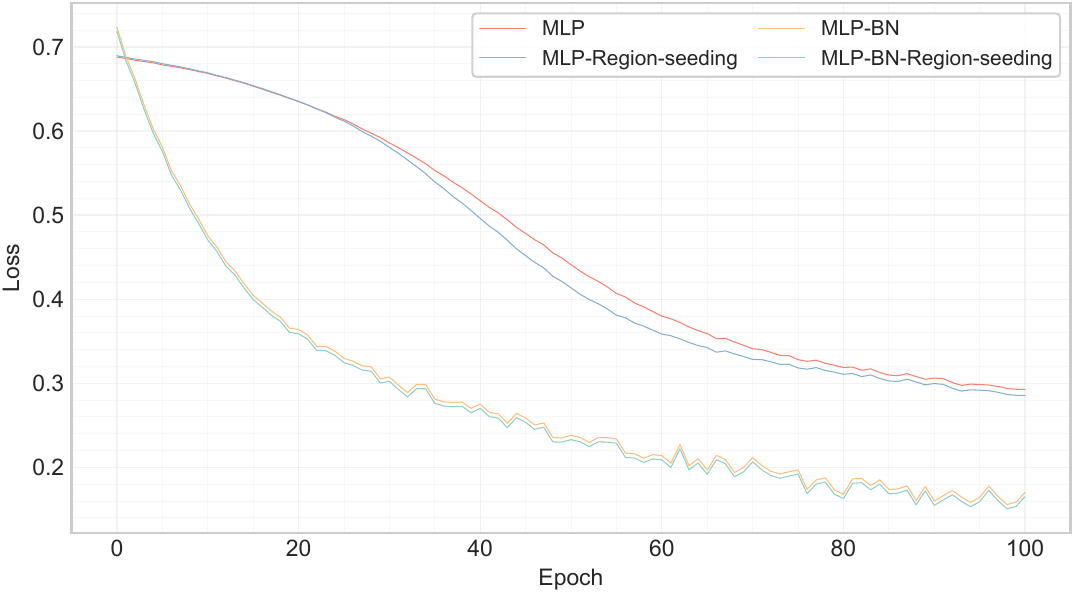}
    \caption{Feedforward MLP}
    \label{fig:moon_loss_mlp}
  \end{subfigure}
  \hfill
  \begin{subfigure}[t]{0.98\textwidth}
    \centering
    \includegraphics[width=\textwidth]{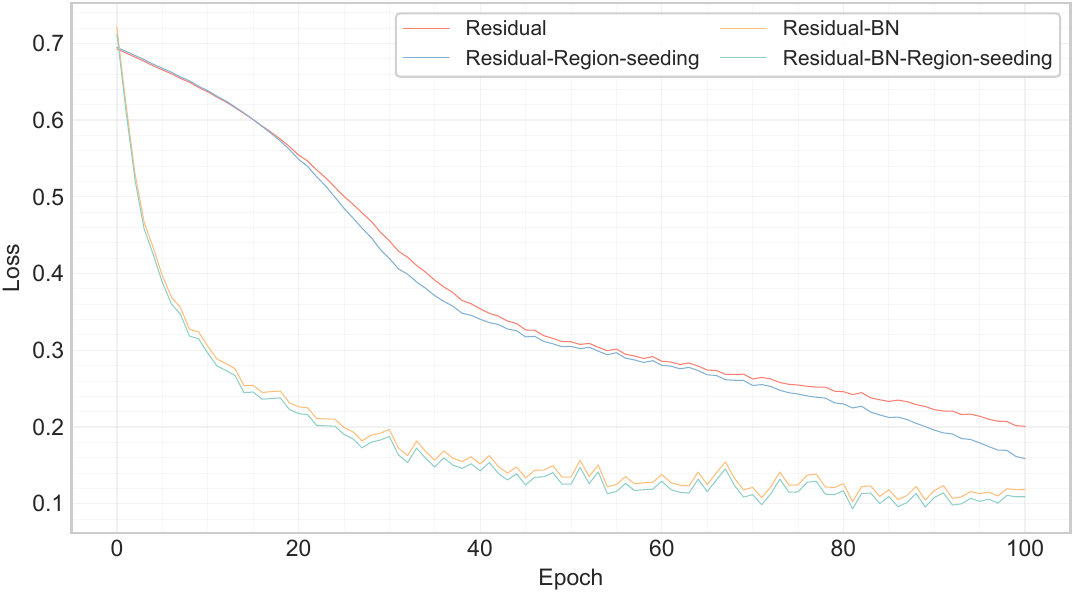}
    \caption{Residual connections}
    \label{fig:moon_loss_res}
  \end{subfigure}
  \caption{Two Moons dataset: optimization dynamics (task loss) over training epochs.}
  \label{fig:moon_loss}
\end{figure*}

\begin{figure*}[ht]
  \centering
  \begin{subfigure}[t]{0.98\textwidth}
    \centering
    \includegraphics[width=\textwidth]{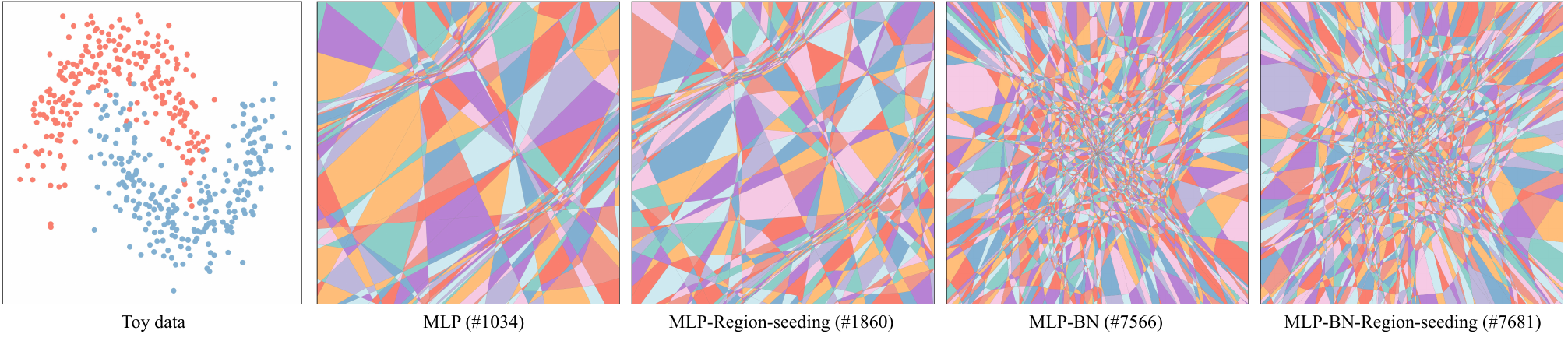}
    \caption{Feedforward MLP}
    \label{fig:moon_regions_mlp}
  \end{subfigure}
  \hfill
  \begin{subfigure}[t]{0.98\textwidth}
    \centering
    \includegraphics[width=\textwidth]{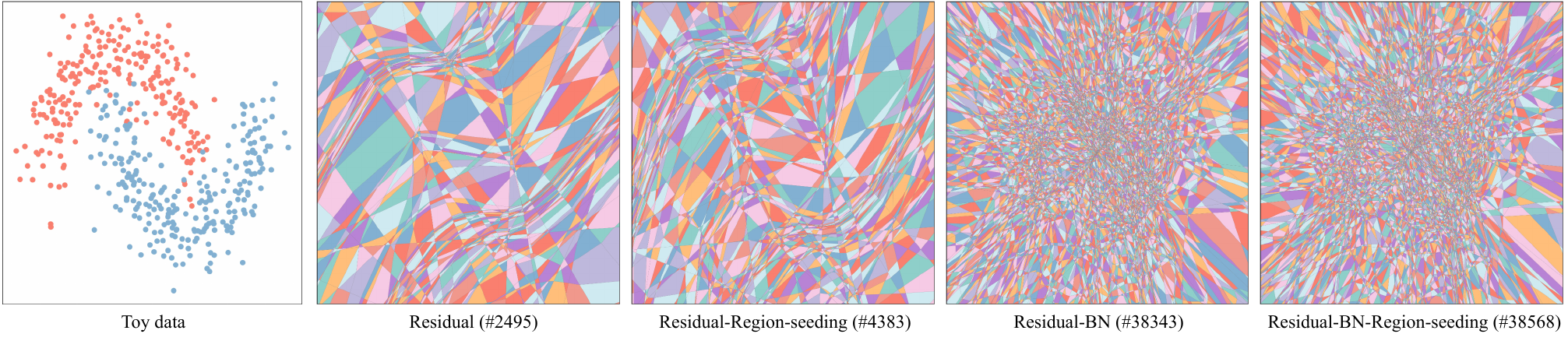}
    \caption{Residual connections}
    \label{fig:moon_regions_res}
  \end{subfigure}
  \caption{Two Moons dataset: exact realized affine-region counts.}
  \label{fig:moon_regions}
\end{figure*}

\section{Toy-data Experiments: Implementation Details and Additional Results}
\label{C}

In this section, we provide the exact implementation details for the synthetic data experiments reported in the main text. Furthermore, we present additional experimental results on the Two Moons and Gaussian Quantiles datasets to demonstrate the robustness of the proposed region-seeding regularizer across different data geometries and network capacities.

\subsection{Hyperparameter Settings for Main Text Experiments}
\label{app:toy_hyperparams}

We report the configuration used for the random dataset experiments discussed in Section~\ref{subsec:toy}. To ensure robustness, all experiments were repeated across three different random seeds, and we report the average performance.

\paragraph{Optimization and Training.}
All models were trained for $1,000$ epochs using the Adam optimizer. The specific optimization hyperparameters are listed in Table~\ref{tab:toy_optim_params}.

\begin{table}[ht]
\centering
\caption{Optimization hyperparameters for toy dataset experiments.}
\label{tab:toy_optim_params}
\begin{small}
\begin{sc}
\begin{tabular}{lc}
\toprule
Parameter & Value \\
\midrule
Optimizer & Adam \\
Learning Rate & $1 \times 10^{-4}$ \\
Betas & $(0.9, 0.999)$ \\
Weight Decay & $0$ \\
Batch Size & $64$ \\
Max Epochs & $1000$ \\
\bottomrule
\end{tabular}
\end{sc}
\end{small}
\end{table}

\paragraph{Dataset Specifications.}
The primary synthetic dataset ("Random") consists of $N=500$ samples in $\mathbb{R}^2$. The targets are assigned to $2$ distinct classes.

\paragraph{Model Architectures.}
We evaluated two distinct architectures: a standard feedforward MLP and a Residual Network (ResNet). Both architectures process 2-dimensional inputs and produce logits for 2 classes.
\begin{itemize}
    \item \textbf{Feedforward MLP:} A 5-layer fully connected network. The architecture follows the structure: Input $\to$ [Linear(32) $\to$ Norm $\to$ ReLU] $\times 5$ $\to$ Linear(2). The hidden width is uniform at $32$ neurons.
    \item \textbf{ResNet-MLP:} A residual architecture consisting of an input projection followed by 3 residual blocks.
    \begin{itemize}
        \item \emph{Input Projection:} Linear($2 \to 32$) $\to$ Norm $\to$ ReLU.
        \item \emph{Residual Block:} $x_{out} = \mathrm{ReLU}(x + \mathcal{F}(x))$, where $\mathcal{F}$ consists of [Linear($32 \to 32$) $\to$ Norm $\to$ ReLU $\to$ Linear($32 \to 32$) $\to$ Norm].
        \item \emph{Output:} Linear($32 \to 2$).
    \end{itemize}
\end{itemize}
For normalization, we compared two settings: \emph{Identity} (no normalization) and \emph{Batch Normalization} (standard 1D BatchNorm with affine parameters).

\subsection{Region-Seeding Regularization Implementation}
\label{app:region_seeding_config}

The region-seeding regularizer is implemented as a plug-and-play wrapper module. Based on the provided source code, the specific functional forms for the depth-dependent weights $\lambda_\ell$ and the annealing schedule $\eta(t)$ used in Equation~\eqref{eq:rtotal} are defined as follows.

\paragraph{Pre-activation Penalty Definition.}
Consistent with the main text, the penalty for a specific layer $\ell$ is calculated as the $\ell_2$ norm of the pre-activation vector normalized by the number of neurons $n_\ell$. For a batch $B$, this is given by:
\begin{equation}
\mathcal{R}_\ell(\theta; B) = \frac{1}{|B|} \sum_{b=1}^{|B|} \frac{\|u_\ell(x_b)\|_2}{n_\ell},
\end{equation}
where \(|B|\) is the batch size. The implementation applies this penalty to all activation modules in the network (specifically ReLU for CPA networks, though the code supports LeakyReLU and GELU).

\paragraph{Depth-Dependent Layer Weighting ($\lambda_\ell$).}
The implementation uses a ``decay'' mode for layer weighting, designed to assign higher regularization strength to earlier layers. For a network with a total of $L$ regularized activation layers, the weight $\lambda_\ell$ for the $\ell$-th layer (1-indexed, $\ell \in \{1, \dots, L\}$) is computed as:
\begin{equation}
\label{eq:layer_decay}
\lambda_\ell = 4 \cdot \frac{L - \ell + 1}{L + 1}.
\end{equation}
This formula ensures that weights decay linearly with depth. For example, in the 5-layer MLP ($L=5$), the weights range from $\lambda_1 \approx 3.33$ to $\lambda_5 \approx 0.67$.

\paragraph{Annealing Schedule ($\eta(t)$).}
The regularization strength is annealed linearly over the training process to allow task-driven refinement in later epochs. Given the current epoch $t$ and the maximum training epochs $T_{\max}=1000$, the annealing factor is:
\begin{equation}
\eta(t) = \max\left(0, 1 - \frac{t}{T_{\max}}\right).
\end{equation}

\paragraph{Total Regularization Term.}
Combining these components, the final regularization term added to the loss is:
\begin{equation}
\mathcal{R}(\theta; B, t) = \alpha \cdot \eta(t) \cdot \sum_{\ell=1}^{L} \lambda_\ell \mathcal{R}_\ell(\theta; B).
\end{equation}
For all toy experiments utilizing the regularizer, we set the global scaling factor $\alpha = 1 \times 10^{-2}$.

\subsection{Geometric Analysis of Neuron Switching Hyperplanes}
\label{app:geometric_analysis}

To empirically validate the mechanism of the proposed regularizer, we analyze the geometric distribution of neuron switching hyperplanes relative to the data. Specifically, we investigate the Euclidean distances between input samples $x$ and the switching surfaces induced by the network's neurons.

\paragraph{Geometric Definitions.}
As derived in Eq.~\ref{eq:dist}, the Euclidean distance from a data point $x$ to the switching hyperplane $H_{\ell,i}$ of neuron $(\ell, i)$ within its local region is given by:
\begin{equation}
    d_{\ell,i}(x) \;=\; \mathrm{dist}\bigl(x,H_{\ell,i}\bigr) \;=\; \frac{|z_{\ell,i}(x)|}{\|a_{\ell,i}\|_2},
\end{equation}
where $z_{\ell,i}(x)$ is the pre-activation value and $a_{\ell,i}$ is the input-space normal vector defined in Eq.~\eqref{eq:a-c-definition}. This metric quantifies the geometric margin between the data and the neuron's linear boundary.

\paragraph{Visualization Methodology.}
In Figure~\ref{fig:log_dist_hist}, we plot the histograms of these distances for the random (Figure~\ref{fig:toy_regions_vis}) dataset. The visualization details are as follows:
\begin{itemize}
    \item \textbf{Metric (X-axis):} We plot the log-distance, $\log(d_{\ell,i}(x))$. A value of $\log(d) < 0$ implies $d < 1$ (relative to input scale), indicating that the data point is proximal to the hyperplane. Smaller (more negative) values correspond to tighter alignment between the data and the switching surfaces.
    \item \textbf{Count (Y-axis):} The number of neuron-sample pairs falling into each distance bin.
    \item \textbf{Scope (Legend):}
    \begin{itemize}
        \item \textbf{All Neurons:} Represents the total potential switching surfaces for the hidden layer. With $500$ samples and a layer width of $32$, the total number of distance measurements is $500 \times 32 = 16,000$.
        \item \textbf{Intersect Neurons:} Aggregates distances only for the subset of neurons whose switching surfaces effectively intersect or bound the local polytope containing $x$. This represents the active local geometry.
    \end{itemize}
\end{itemize}

\begin{figure}[ht]
  \centering
  \begin{subfigure}[b]{0.98\linewidth}
    \centering
    \includegraphics[width=\linewidth]{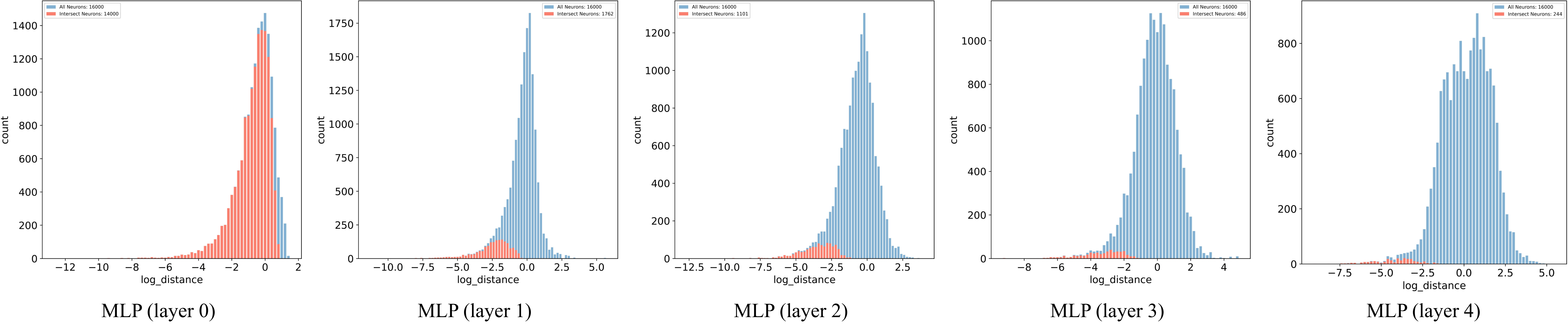} 
    \caption{MLP Model. Distribution of log-distances.}
    \label{fig:hist_baseline}
  \end{subfigure}
  \begin{subfigure}[b]{0.98\linewidth}
    \centering
    \includegraphics[width=\linewidth]{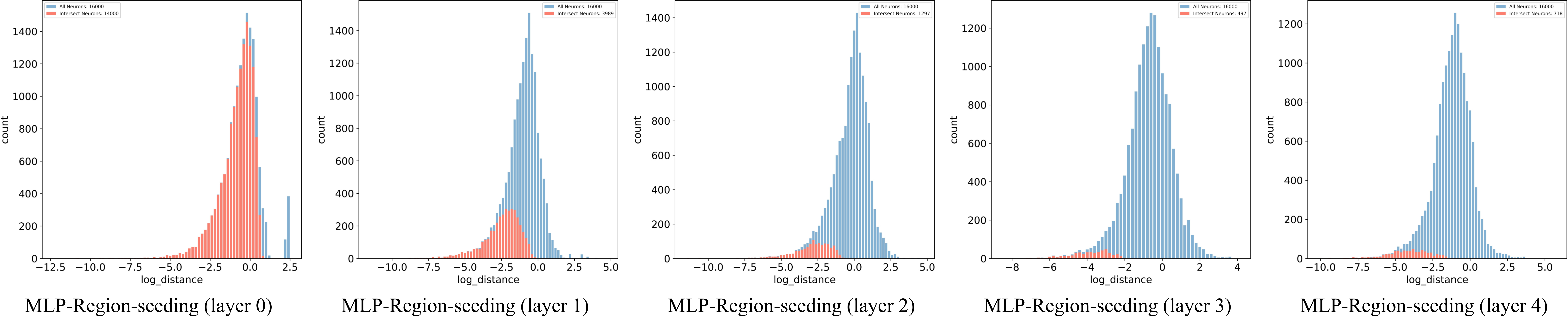}
    \caption{MLP-Region-seeding Model. Distribution of log-distances.}
    \label{fig:hist_seeded}
  \end{subfigure}
  \caption{Distribution of Data-to-Hyperplane Distances. Histograms of $\log(d(x, H))$ for the MLP trained on the random toy dataset (Figure~\ref{fig:toy_regions_vis}). The MLP-Region-seeding Model (bottom) exhibits a distribution significantly shifted towards negative values compared to the Baseline (top). This indicates that the regularizer successfully encourages neurons to form switching surfaces in close proximity to the data, thereby "seeding" a finer partition of the input space during the optimization.}
  \label{fig:log_dist_hist}
\end{figure}

\paragraph{Analysis of Results.}
Comparing the distributions in Figure~\ref{fig:log_dist_hist}, we observe a distinct shift in the geometry of the Region-Seeded model.
\begin{enumerate}
    \item \textbf{Distribution Shift:} The histogram for the Region-Seeded model (Figure~\ref{fig:hist_seeded}) is densely concentrated in the negative log-distance region (left side), whereas the Baseline model (Figure~\ref{fig:hist_baseline}) exhibits a broader distribution skewed towards larger positive values.
    \item \textbf{Geometric Interpretation:} The shift towards smaller $\log(d)$ confirms that our regularizer effectively reduces the margin between data points and neuron switching hyperplanes. By pulling these surfaces closer to the data, the model increases the probability of them intersecting the local neighborhood of data points.
    \item \textbf{Conclusion:} This empirical evidence supports our theoretical motivation: minimizing pre-activation magnitudes drives switching hyperplanes to intersect data neighborhoods, thereby increasing the density of realized affine regions and enhancing local expressivity.
\end{enumerate}

\subsection{Additional Results: Two Moons Dataset}
\label{app:toy_moon}

To evaluate the robustness of our approach on a dataset with non-linear, non-convex decision boundaries, we replicated the full experimental suite on the ``Two Moons'' dataset. The experimental setup (width $32$) and hyperparameters mirror the main text exactly, with $\alpha=1 \times 10^{-2}$.

\paragraph{Exact Affine-Region Visualization.}
Figure~\ref{fig:moon_regions} visualizes the input-space partitions. Consistent with the random dataset results, the region-seeded models (both MLP and ResNet) exhibit a visibly finer fragmentation of the input space.

\paragraph{Quantitative Analysis.}
The quantitative metrics confirm the visual intuition.
\begin{itemize}
    \item \textbf{Accuracy (Figure~\ref{fig:moon_acc}):} The region-seeded models achieve higher accuracy in the early epochs compared to their baseline counterparts across both architectures and normalization settings.
    \item \textbf{Optimization Dynamics (Figure~\ref{fig:moon_loss}):} The task loss decreases more rapidly for the region-seeded variants, indicating that the early formation of local regions facilitates faster optimization.
    \item \textbf{Region Counts (Figure~\ref{fig:moon_regions_vis}):} Exact enumeration shows a strictly higher number of realized affine regions for the regularized models throughout the training process.
\end{itemize}

\begin{figure*}[ht]
  \centering
  \begin{subfigure}[b]{0.98\textwidth}
    \centering
    \includegraphics[width=\textwidth]{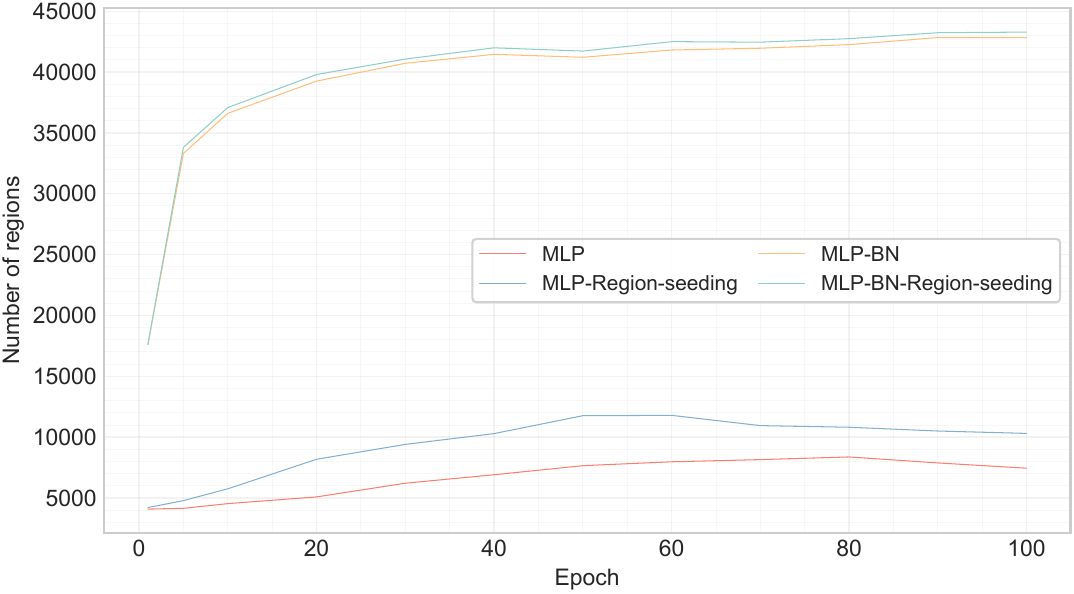}
    \caption{Feedforward MLP (Width 64)}
    \label{fig:gaussian_regions_vis_mlp}
  \end{subfigure}
  \\[10pt] 
  \begin{subfigure}[b]{0.98\textwidth}
    \centering
    \includegraphics[width=\textwidth]{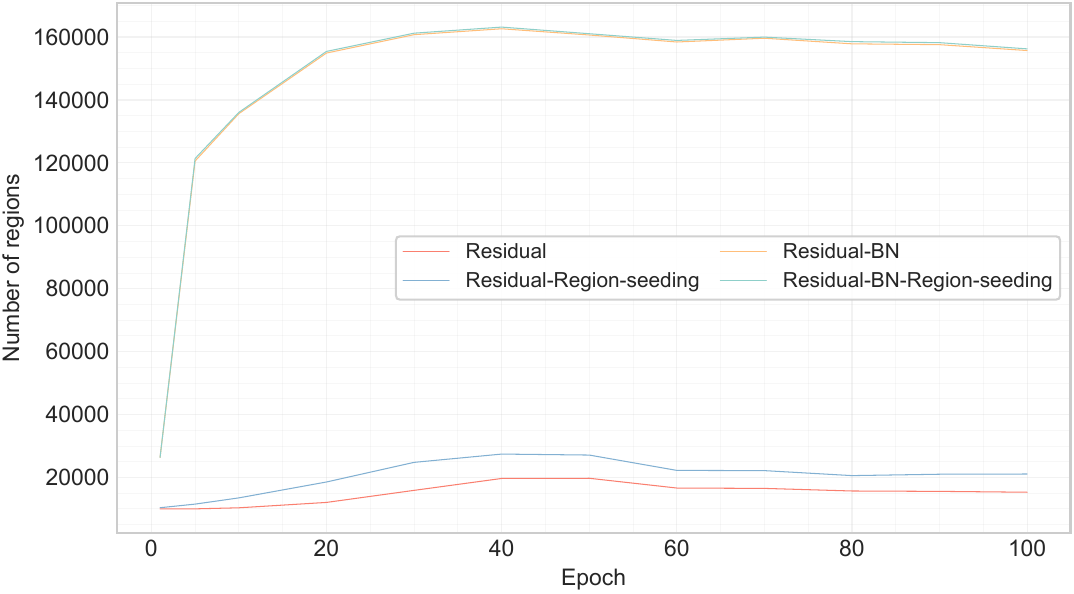}
    \caption{Residual connections (Width 64)}
    \label{fig:gaussian_regions_vis_residual}
  \end{subfigure}
  \caption{Gaussian Quantiles dataset: exact affine-region partition visualizations. The region-seeded models induce a finer partition with more realized affine regions.}
  \label{fig:gaussian_regions_vis}
\end{figure*}

\begin{figure*}[ht]
  \centering
  \begin{subfigure}[t]{0.98\textwidth}
    \centering
    \includegraphics[width=\textwidth]{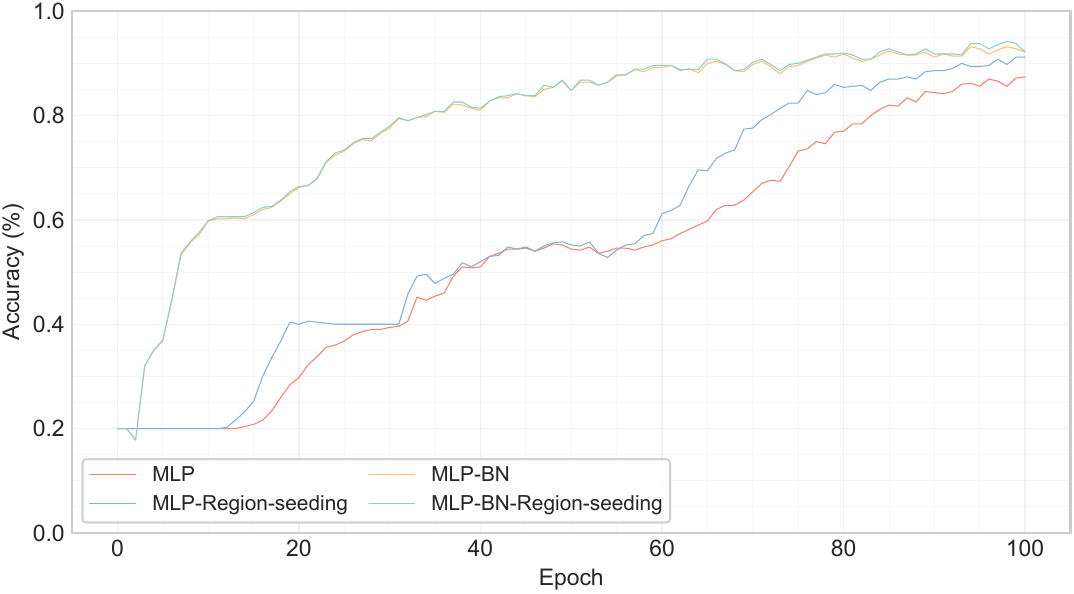}
    \caption{Feedforward MLP}
    \label{fig:gaussian_acc_mlp}
  \end{subfigure}
  \hfill
  \begin{subfigure}[t]{0.98\textwidth}
    \centering
    \includegraphics[width=\textwidth]{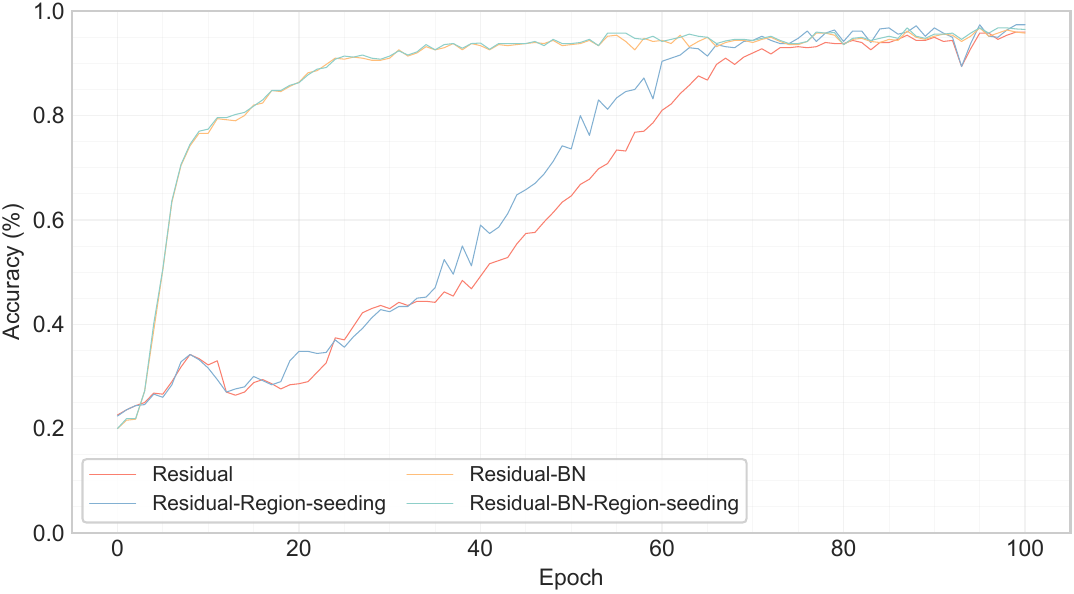}
    \caption{Residual connections}
    \label{fig:gaussian_acc_res}
  \end{subfigure}
  \caption{Gaussian Quantiles dataset: test accuracy over training epochs.}
  \label{fig:gaussian_acc}
\end{figure*}

\begin{figure*}[ht]
  \centering
  \begin{subfigure}[t]{0.98\textwidth}
    \centering
    \includegraphics[width=\textwidth]{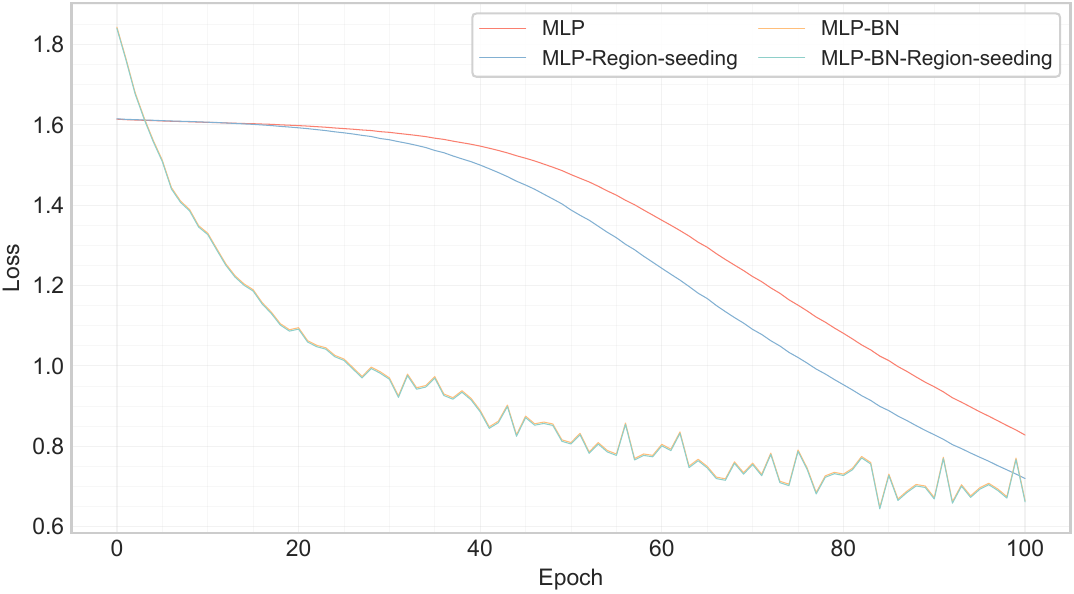}
    \caption{Feedforward MLP}
    \label{fig:gaussian_loss_mlp}
  \end{subfigure}
  \hfill
  \begin{subfigure}[t]{0.98\textwidth}
    \centering
    \includegraphics[width=\textwidth]{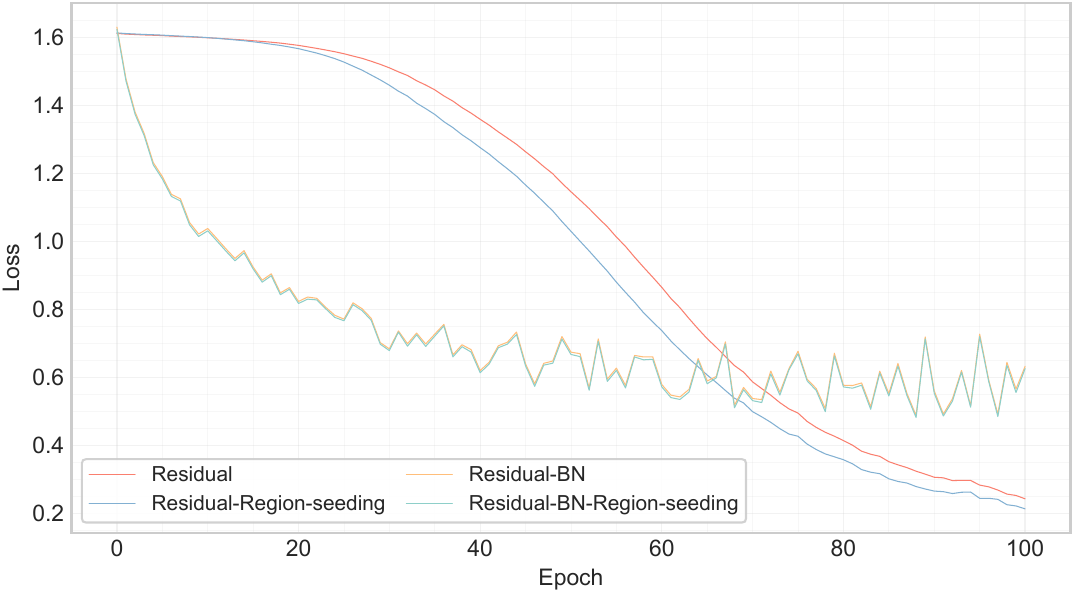}
    \caption{Residual connections}
    \label{fig:gaussian_loss_res}
  \end{subfigure}
  \caption{Gaussian Quantiles dataset: optimization dynamics (task loss) over training epochs.}
  \label{fig:gaussian_loss}
\end{figure*}

\begin{figure*}[ht]
  \centering
  \begin{subfigure}[t]{0.98\textwidth}
    \centering
    \includegraphics[width=\textwidth]{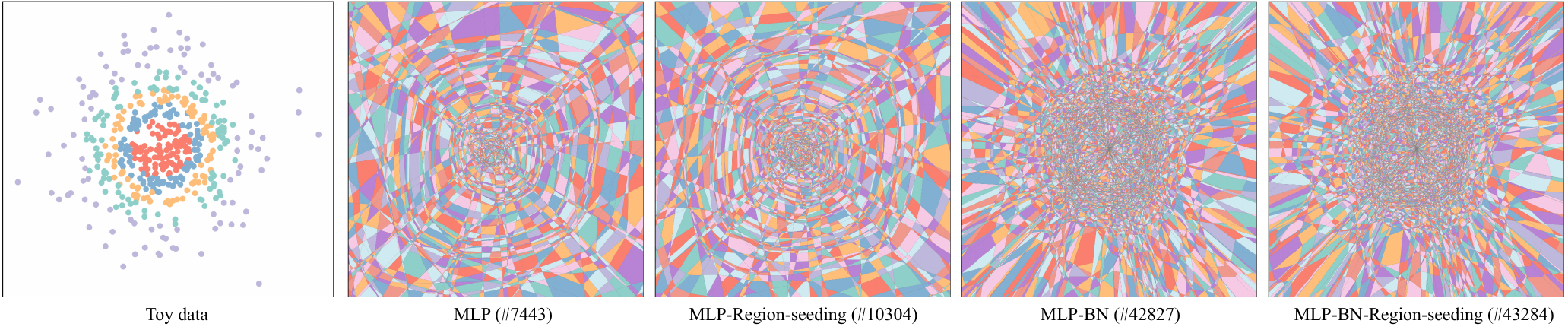}
    \caption{Feedforward MLP}
    \label{fig:gaussian_regions_mlp}
  \end{subfigure}
  \hfill
  \begin{subfigure}[t]{0.98\textwidth}
    \centering
    \includegraphics[width=\textwidth]{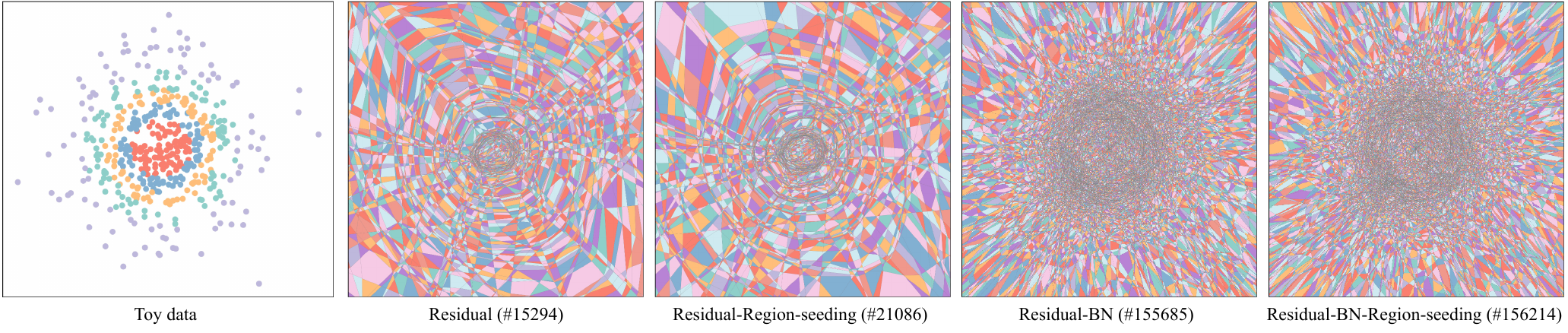}
    \caption{Residual connections}
    \label{fig:gaussian_regions_res}
  \end{subfigure}
  \caption{Gaussian Quantiles dataset: exact realized affine-region counts.}
  \label{fig:gaussian_regions}
\end{figure*}

\subsection{Additional Results: Gaussian Quantiles Dataset}
\label{app:toy_gaussian}

We extended the evaluation to the ``Gaussian Quantiles'' dataset (5 classes), which presents a multi-class problem with concentric ring-like decision structures. For this experiment, the network width was increased to $m=64$ (for both MLP and ResNet) to accommodate the higher complexity, while all other settings remained identical to the main text.

\paragraph{Results.}
Despite the change in dataset geometry and increased model capacity, the conclusions remain consistent with the main text. Figure~\ref{fig:gaussian_regions} illustrates that region seeding successfully promotes partition density adapted to the concentric structure. Figures~\ref{fig:gaussian_acc}, \ref{fig:gaussian_loss}, and \ref{fig:gaussian_regions_vis} demonstrate that the regularizer continues to provide a clear advantage in convergence speed, accuracy, and the number of realized affine regions, proving the method's effectiveness scales to wider networks and multi-class settings.

\section{Real-data Experiments: Implementation Details and  and Ablation Studies}
\label{D}

In this section, we provide the detailed hyperparameter configurations for the ImageNet-1k experiments reported in Section~\ref{sec:imagenet} and present an ablation study analyzing the components of the region-seeding regularizer.

\subsection{Hyperparameter Configurations}
We conducted experiments on ImageNet-1k using standard training recipes adapted for each architecture. All models were trained using the PyTorch framework.

\paragraph{CNN Architectures (VGG-19, ResNet-18, ResNet-50).}
For Convolutional Neural Networks, we followed standard SGD training protocols. The models were trained for 90 epochs using Nesterov Momentum. The region-seeding regularization coefficient $\alpha$ was set to $10^{-3}$.
Detailed hyperparameters are listed in Table~\ref{tab:cnn_hparams}.

\begin{table}[ht]
\caption{Hyperparameter settings for VGG-19, ResNet-18, and ResNet-50 on ImageNet-1k.}
\label{tab:cnn_hparams}
\centering
\begin{small}
\begin{tabular}{lccc}
\toprule
\textbf{Hyperparameter} & \textbf{VGG-19 (noBN)} & \textbf{ResNet-18} & \textbf{ResNet-50} \\
\midrule
Epochs & 90 & 90 & 90 \\
Optimizer & SGD & SGD & SGD \\
Batch Size & 256 & 256 & 256 \\
Base LR & 0.01 & 0.1 & 0.1 \\
LR Schedule & Step (30, 60, 80) & Step (30, 60, 80) & Step (30, 60, 80) \\
Momentum & 0.9 & 0.9 & 0.9 \\
Weight Decay & $5 \times 10^{-4}$ & $1 \times 10^{-4}$ & $1 \times 10^{-4}$ \\
\midrule
\multicolumn{4}{l}{\textit{Region Seeding Regularizer}} \\
Coefficient $\alpha$ & $10^{-3}$ & $10^{-3}$ & $10^{-3}$ \\
Layer Weights $\lambda_\ell$ & Decay & Decay & Decay \\
Annealing $\eta(t)$ & Linear ($90 \to 0$) & Linear ($90 \to 0$) & Linear ($90 \to 0$) \\
\bottomrule
\end{tabular}
\end{small}
\end{table}

\paragraph{Vision Transformer (ViT-B/16).}
For ViT-B/16, we employed a stronger training recipe consistent with modern practices, including AdamW optimizer, Cosine LR schedule with warmup, and strong data augmentation (Mixup, Cutmix, RandAugment). The regularization coefficient $\alpha$ was set to $10^{-2}$, reflecting the different scaling of pre-activations in Transformer blocks compared to CNNs. Details are provided in Table~\ref{tab:vit_hparams}.

\begin{table}[ht]
\caption{Hyperparameter settings for ViT-B/16 on ImageNet-1k.}
\label{tab:vit_hparams}
\centering
\begin{small}
\begin{tabular}{lc}
\toprule
\textbf{Hyperparameter} & \textbf{ViT-B/16} \\
\midrule
Epochs & 300 \\
Global Batch Size & 4096 \\
Optimizer & AdamW \\
Base LR & $3 \times 10^{-3}$ \\
LR Schedule & Cosine + Linear Warmup (30 ep) \\
Weight Decay & 0.3 \\
Label Smoothing & 0.11 \\
Mixup $\alpha$ / Cutmix $\alpha$ & 0.2 / 1.0 \\
AutoAugment & RandAugment \\
\midrule
\multicolumn{2}{l}{\textit{Region Seeding Regularizer}} \\
Coefficient $\alpha$ & $10^{-2}$ \\
Layer Weights $\lambda_\ell$ & Decay \\
Annealing $\eta(t)$ & Linear ($300 \to 0$) \\
\bottomrule
\end{tabular}
\end{small}
\end{table}

\subsection{Ablation Study: Annealing and Layer Weighting}
To isolate the contributions of the temporal annealing strategy and the spatial layer-wise weighting scheme, we conducted an ablation study across four configurations:
\begin{enumerate}
    \item \textbf{No Annealing + Uniform Weights:} The regularization strength is constant throughout training ($\eta(t)=1$) and applied equally to all layers ($\lambda_\ell=1$).
    \item \textbf{No Annealing + Layer Decay:} The regularization strength is constant over time, but weights decrease with depth according to Eq.~\eqref{eq:layer_decay}.
    \item \textbf{Annealing + Uniform Weights:} The regularization decays linearly over epochs, but spatial weights are uniform.
    \item \textbf{Annealing + Layer Decay (Ours):} Both temporal annealing and depth-dependent weighting are applied.
\end{enumerate}

\paragraph{Results and Discussion.}
We assessed these variants on ImageNet-1k training. The visual comparisons of validation accuracy and validation loss for each model are presented in Figure~\ref{fig:ablation_vgg} (VGG-19), Figure~\ref{fig:ablation_res18} (ResNet-18), Figure~\ref{fig:ablation_res50} (ResNet-50), and Figure~\ref{fig:ablation_vit} (ViT-B/16). The comparative results confirm that the proposed configuration (\textbf{Annealing + Layer Decay}) yields the best trade-off between early-stage region formation and final task performance.

\begin{figure}[ht]
  \centering
  \begin{subfigure}[b]{0.98\textwidth}
    \centering
    \includegraphics[width=\linewidth]{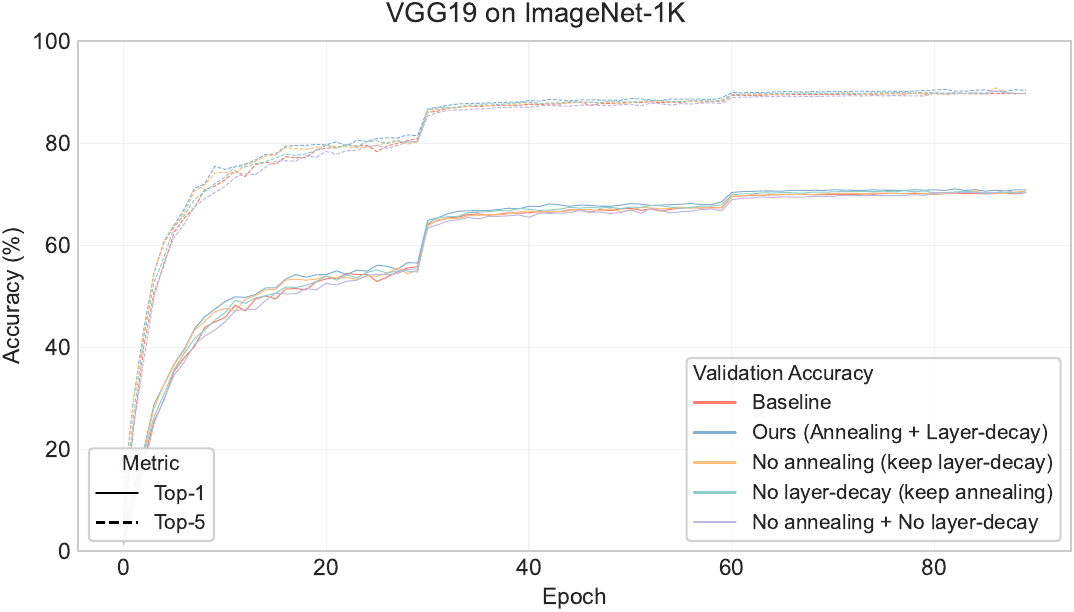}
    \caption{Validation Accuracy}
    \label{fig:ablation_vgg_acc}
  \end{subfigure}
  \hfill
  \begin{subfigure}[b]{0.98\textwidth}
    \centering
    \includegraphics[width=\linewidth]{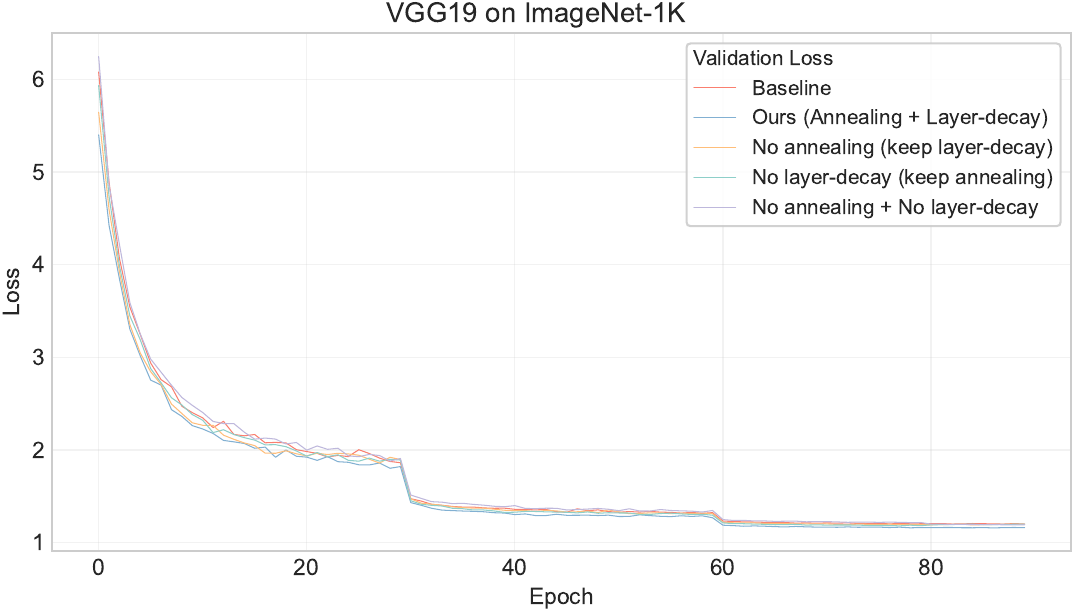}
    \caption{Validation Loss}
    \label{fig:ablation_vgg_loss}
  \end{subfigure}
  \caption{\textbf{Ablation on VGG-19 (noBN).} Comparison of different regularization strategies. The proposed method (Annealing + Layer Decay) achieves superior early convergence and final accuracy compared to non-annealed or uniform-weight baselines.}
  \label{fig:ablation_vgg}
\end{figure}

\begin{figure}[ht]
  \centering
  \begin{subfigure}[b]{0.98\textwidth}
    \centering
    \includegraphics[width=\linewidth]{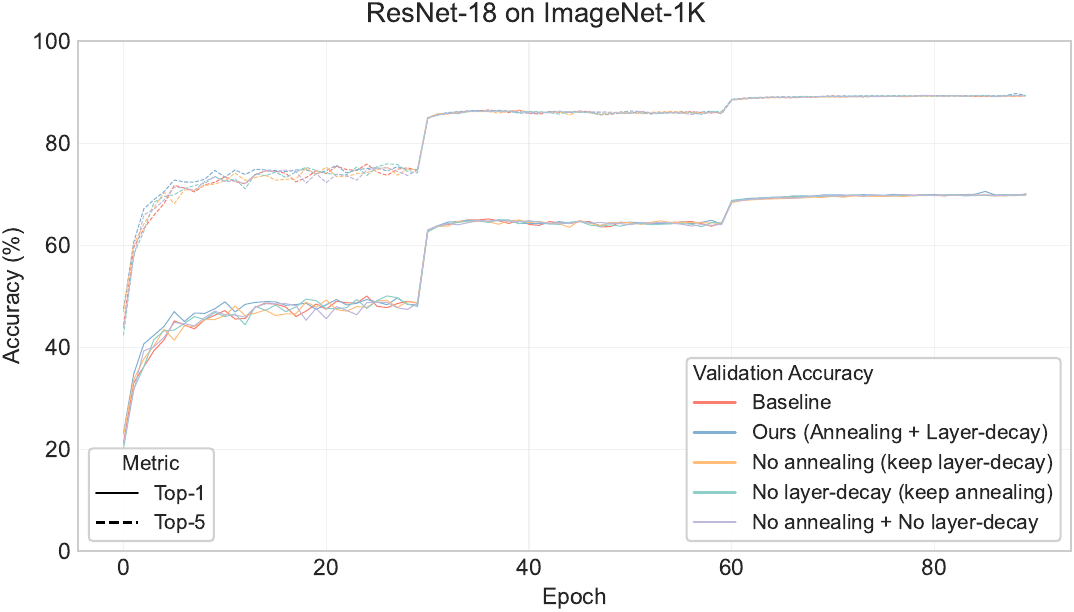}
    \caption{Validation Accuracy}
    \label{fig:ablation_res18_acc}
  \end{subfigure}
  \hfill
  \begin{subfigure}[b]{0.98\textwidth}
    \centering
    \includegraphics[width=\linewidth]{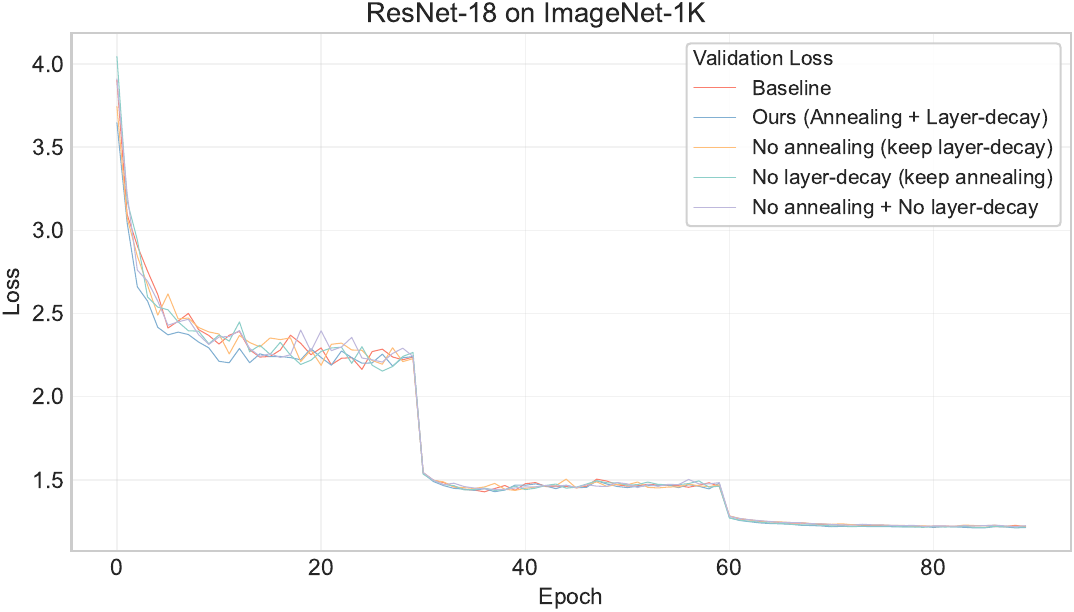}
    \caption{Validation Loss}
    \label{fig:ablation_res18_loss}
  \end{subfigure}
  \caption{\textbf{Ablation on ResNet-18.} Validation accuracy and validation loss trajectories. The combination of temporal annealing and spatial layer decay provides the most robust optimization profile.}
  \label{fig:ablation_res18}
\end{figure}

\begin{figure}[ht]
  \centering
  \begin{subfigure}[b]{0.98\textwidth}
    \centering
    \includegraphics[width=\linewidth]{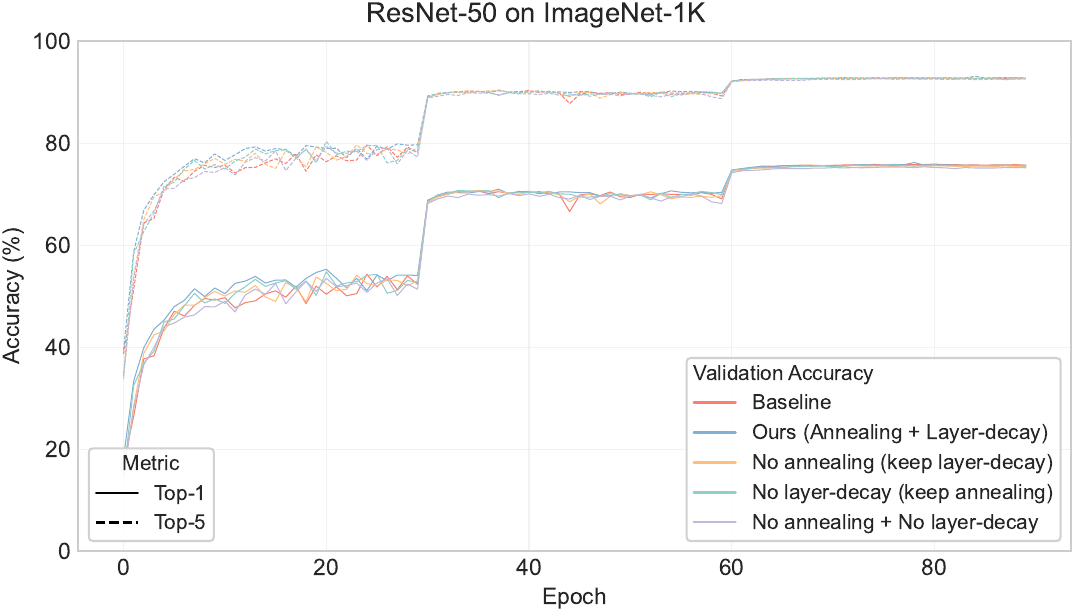}
    \caption{Validation Accuracy}
    \label{fig:ablation_res50_acc}
  \end{subfigure}
  \hfill
  \begin{subfigure}[b]{0.98\textwidth}
    \centering
    \includegraphics[width=\linewidth]{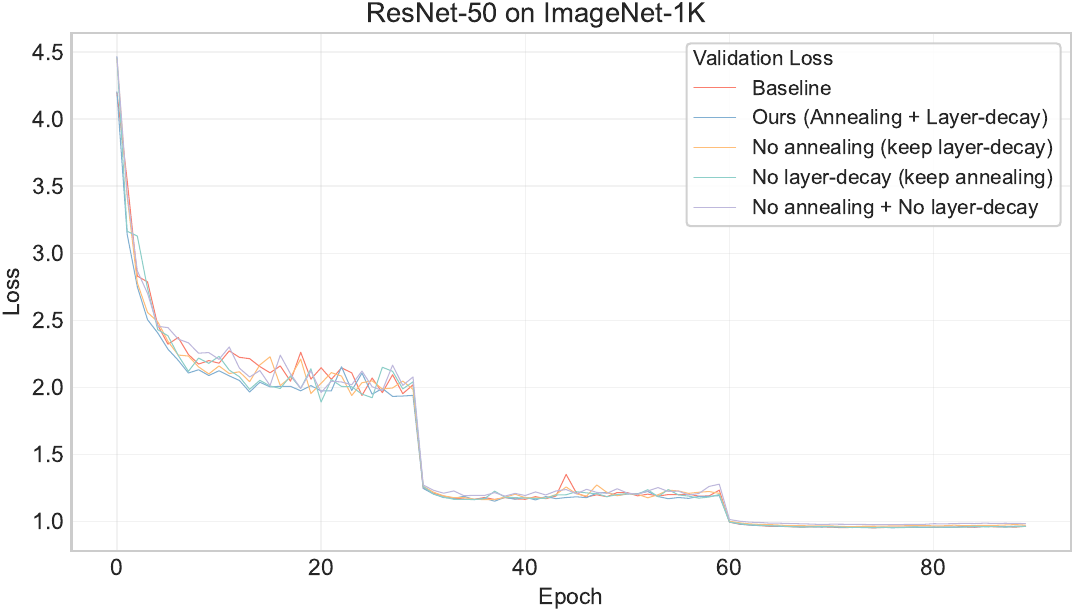}
    \caption{Validation Loss}
    \label{fig:ablation_res50_loss}
  \end{subfigure}
  \caption{\textbf{Ablation on ResNet-50.} The full method (Annealing + Layer Decay) consistently outperforms partial configurations, particularly avoiding the late-stage performance degradation seen in non-annealed settings.}
  \label{fig:ablation_res50}
\end{figure}

\begin{figure}[ht]
  \centering
  \begin{subfigure}[b]{0.98\textwidth}
    \centering
    \includegraphics[width=\linewidth]{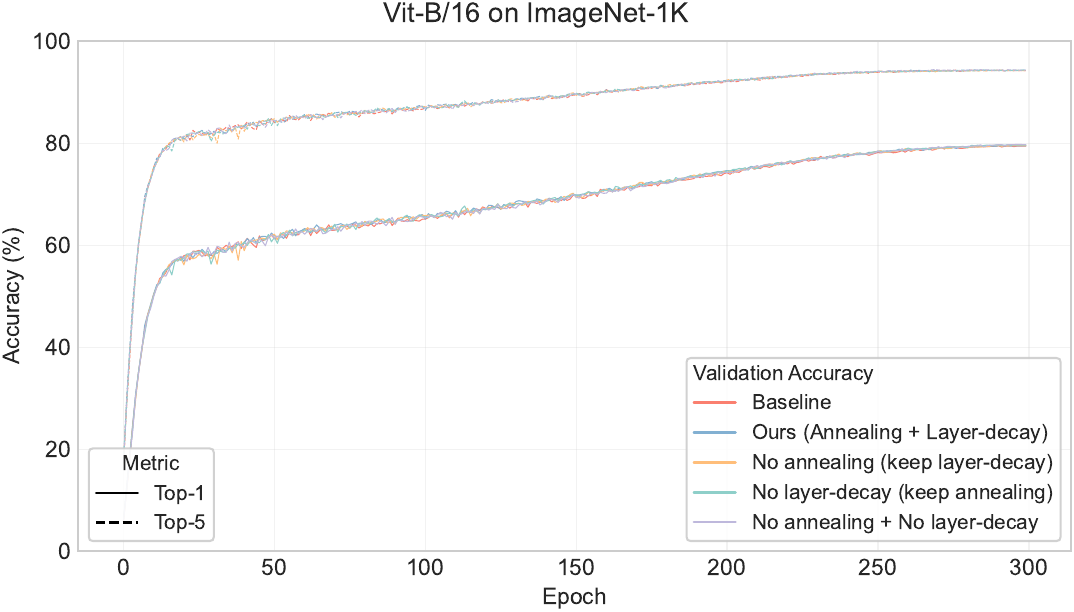}
    \caption{Validation Accuracy}
    \label{fig:ablation_vit_acc}
  \end{subfigure}
  \hfill
  \begin{subfigure}[b]{0.98\textwidth}
    \centering
    \includegraphics[width=\linewidth]{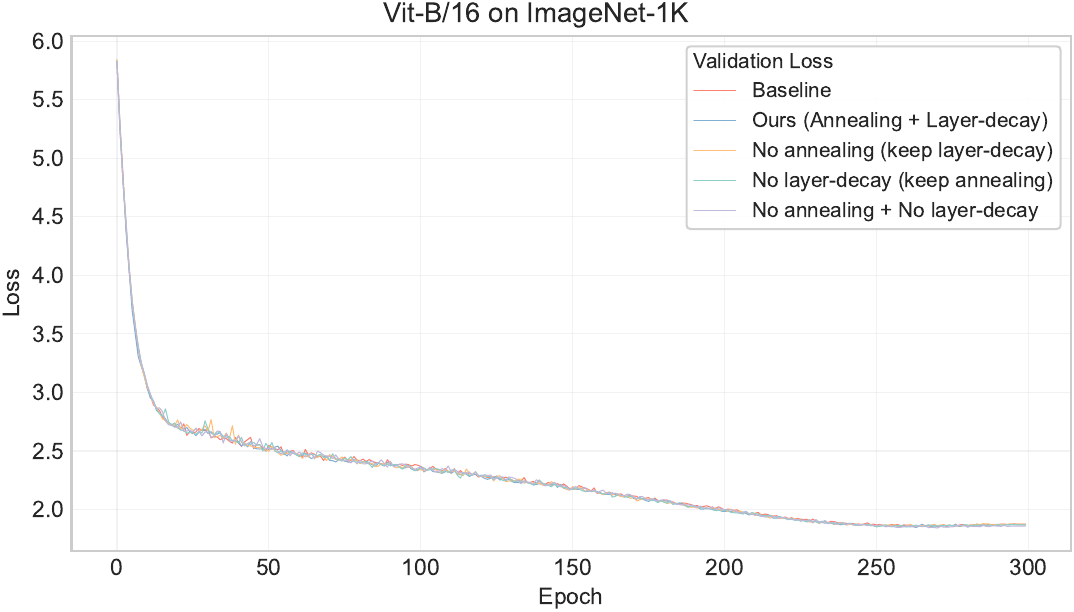}
    \caption{Validation Loss}
    \label{fig:ablation_vit_loss}
  \end{subfigure}
  \caption{\textbf{Ablation on ViT-B/16.} Even for the Transformer architecture, the proposed region-seeding strategy improves early training dynamics and maintains competitive final performance compared to ablated variants.}
  \label{fig:ablation_vit}
\end{figure}

\textbf{1. The Necessity of Annealing.}
Experiments without annealing (Configurations 1 and 2) suffered from degraded final accuracy compared to the experiments with annealing. This observation aligns with our theoretical framework: while minimizing pre-activations (small $|z|$) seeds partitions effectively (Theorem~\ref{thm:distance-intersection}), maintaining this constraint strictly throughout the entire training process conflicts with the classification objective. Annealing resolves this by enforcing geometric seeding early when the partition is being formed, and relaxing it later to allow task-driven feature refinement.

\textbf{2. The Role of Depth-Weighted Decay.}
Comparisons between Uniform Weights and Layer Decay (Configuration 3 vs. 4) demonstrate that penalizing earlier layers more heavily is beneficial. Geometrically, cuts formed by early layers partition the input space globally, whereas deeper layers refine these partitions locally. By focusing the "seeding" pressure on early layers, we encourage a richer set of fundamental hyperplanes that propagate through the network. Conversely, applying high penalties to deep layers (Uniform strategy) can overly restrict the final linear readout's ability to discriminate features, potentially harming performance.

\textbf{Conclusion.}
The combination of \textbf{Annealing} and \textbf{Layer Decay} is essential. The former balances the trade-off between geometric complexity and margin maximization over time, while the latter targets the regularization where it is most topologically effective—at the earlier layers of the hierarchy. This configuration, used in our main results, consistently outperforms the ablated variants.




\end{document}